\documentclass{article}

\usepackage{microtype}
\usepackage{graphicx}
\usepackage{subcaption}
\usepackage{booktabs} 

\usepackage{hyperref}

\usepackage[accepted]{icml2026}

\usepackage{amsmath}
\usepackage{amssymb}
\usepackage{mathtools}
\usepackage{amsthm}

\usepackage[capitalize,noabbrev]{cleveref}

\usepackage{xspace}
\usepackage{pifont}
\usepackage{booktabs}       
\usepackage{amsfonts}       
\usepackage{microtype}      
\usepackage{xcolor}         
\usepackage{graphicx}
\usepackage{subcaption}
\usepackage{thmtools}
\usepackage{booktabs}
\usepackage{adjustbox}
\usepackage{wrapfig}
\usepackage[normalem]{ulem}
\usepackage{mathrsfs}
\usepackage{multirow}
\usepackage{svg}
\usepackage{thm-restate}
\usepackage{tabularx}
\usepackage{colortbl}
\usepackage{makecell}
\usepackage{titletoc}

\usepackage{amsmath,amsfonts,bm}

\def\eqref#1{equation~\ref{#1}}

\def\1{\bm{1}}

\DeclareMathAlphabet{\mathsfit}{\encodingdefault}{\sfdefault}{m}{sl}
\SetMathAlphabet{\mathsfit}{bold}{\encodingdefault}{\sfdefault}{bx}{n}

\theoremstyle{plain}
\newtheorem{theorem}{Theorem}[section]

\theoremstyle{definition}
\newtheorem{definition}[theorem]{Definition}

\theoremstyle{remark}

\newcommand{\ie}{i.e.\xspace}
\newcommand{\eg}{e.g.\xspace}

\usepackage[textsize=tiny]{todonotes}

\icmltitlerunning{Efficient Continual Alignment for Open-Ended Image-to-Text Generation}

\begin{document}

\twocolumn[
  \icmltitle{ECA: Efficient Continual Alignment for Open-Ended Image-to-Text Generation}



  \icmlsetsymbol{equal}{*}

  \begin{icmlauthorlist}
    \icmlauthor{Jiangtao Kong}{wm}
    \icmlauthor{Peijun Zhao}{JP}
    \icmlauthor{Chun-Fu Chen}{JP}
    \icmlauthor{Youngwook Do}{JP}
    \icmlauthor{Shaohan Hu}{JP}
    \icmlauthor{Tianyi Zhou}{mbzuai}
    \icmlauthor{Huajie Shao}{wm}
  \end{icmlauthorlist}

  \icmlaffiliation{wm}{Department of Computer Science, William \& Mary, Virginia, USA}
  \icmlaffiliation{mbzuai}{Mohamed bin Zayed University of Artificial Intelligence, Masdar City, Abu Dhabi}
  \icmlaffiliation{JP}{Global Technology Applied Research, JPMorganChase, USA}

  \icmlcorrespondingauthor{Huajie Shao}{hshao@wm.edu}

  \icmlkeywords{Machine Learning, ICML}

  \vskip 0.3in
]



\printAffiliationsAndNotice{}  

\begin{abstract}
Incremental Learning (IL) for Open-ended Image-to-Text Generation (OpenITG) enables models to continuously generate accurate, contextually relevant text for new images while preserving previously acquired knowledge. Unlike prior studies, this paper addresses a more practical scenario in which the predominant category of visual data shifts over time as environments evolve. In this context, we introduce a new notion of continual alignment, which incrementally adapts the alignment module within pre-trained VLMs to preserve high-quality cross-modal representations. Based on this idea, we propose \textbf{E}fficient \textbf{C}ontinual \textbf{A}lignment (ECA), a novel exemplar-free IL approach for OpenITG. The key challenge is enabling the model to acquire new, task-specific features while minimizing interference with the established alignment without accessing raw data from previous tasks.
To address this, ECA employs three core mechanisms: a \textbf{M}ixture \textbf{o}f \textbf{Q}uery (MoQ) module that adapts task-specific query tokens, a \textbf{F}ish\textbf{e}r \textbf{D}ynamic \textbf{Ex}pansion (FeDEx) that dynamically expands model structure based on a Fisher Information Matrix (FIM)-based metric, and an embedding dictionary with \textbf{D}ictionary \textbf{R}eplay (DR) to retain past knowledge. To evaluate ECA's performance, we construct four new IL OpenITG benchmarks that better reflect real-world scenarios. Experimental results demonstrate that ECA significantly mitigates catastrophic forgetting and improves IL performance compared to baseline methods. Code and benchmarks are available at \url{https://github.com/Snowball0823/ECA}.


\end{abstract}
 
\section{Introduction}\label{sec:intro}
Open-ended Image-to-Text Generation (OpenITG) tasks, such as image captioning~\citep{vinyals2015show,Xu2015ShowAA,herdade2019image,Ramos_2023_CVPR} and open-ended Visual Question Answering (VQA)~\citep{antol2015vqa,xu-etal-2020-open,fu-etal-2023-generate}, require Vision-Language Models (VLMs) to generate accurate, contextually relevant text based on given images. In real-world scenarios, the visual content shifts as environments and time evolve, leading to significant distribution changes. 
This dynamic setting motivates Incremental Learning (IL) for OpenITG, where a model adapts to evolving visual streams while maintaining generation quality.
\begin{figure}[!t]
  \vspace{-0.05in}
  \centering
  \includegraphics[width=\linewidth]{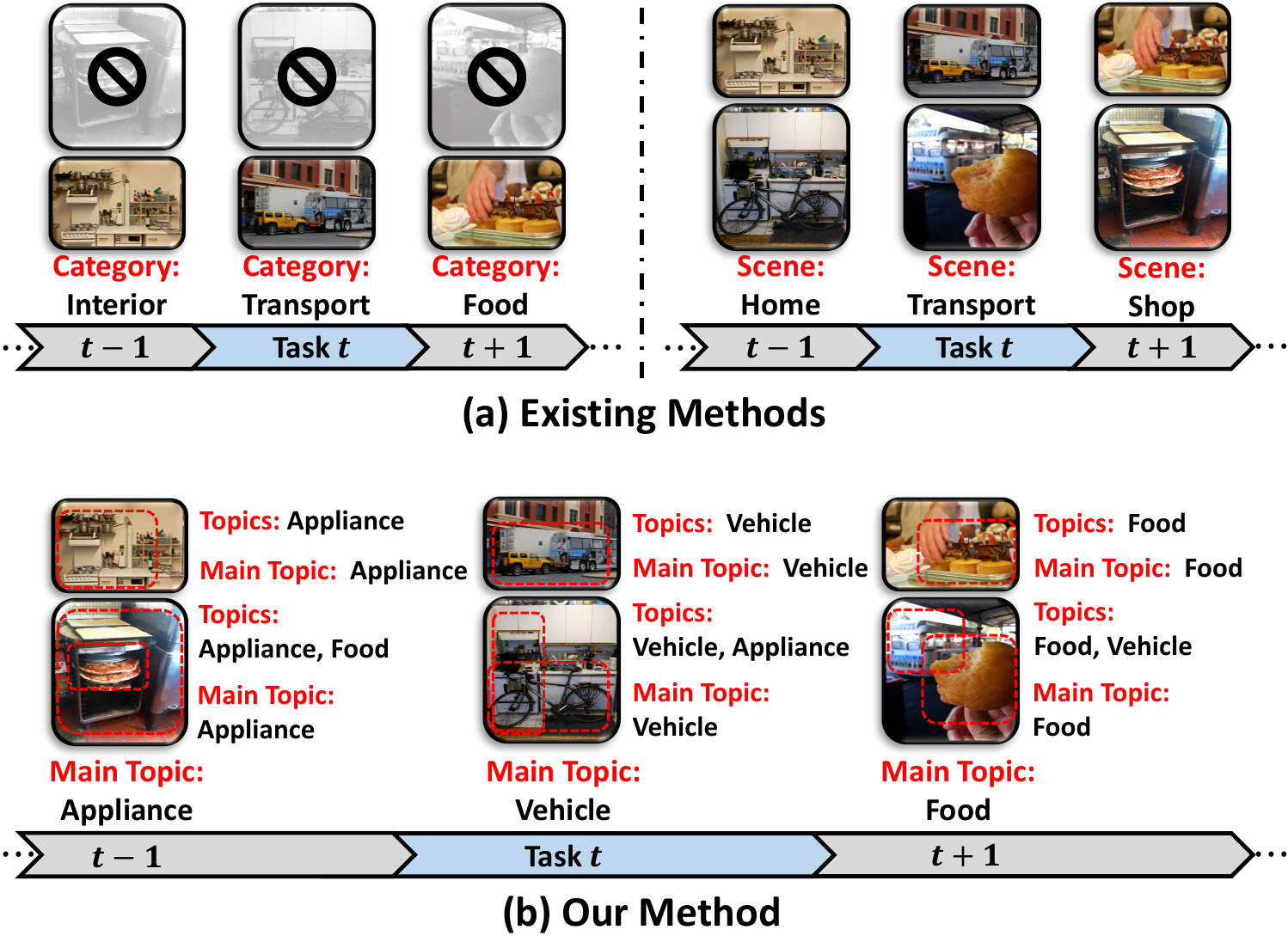}
  \vspace{-0.1in}
  \caption{Comparison between (a) existing task splits~\citep{del2020ratt,zhang2023vqacl,lei2023symbolic} and (b) our main topic split. \textbf{(a) top-left} illustrates methods that assume disjoint object categories and discard images containing multiple topics. \textbf{(a) top-right} illustrates methods that rely on disjoint background scenes. \textbf{(b)} defines each task by the image’s dominant semantic category (``main topic''), which accommodates overlapping semantics and shifts in focus across time or environments, yielding a more realistic continual OpenITG setting.
  }
  \label{fig:intro}
  \vspace{-0.1in}
\end{figure}
As shown in Fig.~\ref{fig:intro}(a), existing works~\citep{del2020ratt,zhang2023vqacl,lei2023symbolic} often assume disjoint image categories or background scenes when splitting tasks, then train the model sequentially. 
However, this assumption does not always hold in practice because scenes contain multiple semantic elements whose prominence changes over time. For example, an indoor image dominated by ``appliance'' cues may later be dominated by ``vehicle'' cues, as illustrated in Fig.~\ref{fig:intro}(b). Similarly, even if ``vehicle'' remains the dominant feature, the environment may introduce additional context.
To capture these dynamic variations, we define an image’s \textbf{\emph{main topic}} as the semantic category of its most prominent object. 
Following the definition, we then split tasks by main topic to better reflect the continuous shifts of visual content.
In our work, \emph{we focus on IL for OpenITG tasks by adapting to the ever-changing main topics in visual data with overlapping semantics}.

Under our main-topic setting for IL in OpenITG, the key challenge is to maintain cross-modal alignment while countering catastrophic forgetting~\citep{mccloskey1989catastrophic} and interference from semantic overlap across tasks. As learning proceeds sequentially, the VLM tends to overwrite previously formed associations, which reduces the relevance and accuracy of generated text. Most prior methods mitigate forgetting by fine-tuning fusion and language components sequentially with stored raw exemplars~\citep{del2020ratt,lei2023symbolic,zhang2023vqacl}. However, these approaches introduce three more major drawbacks. First, full-scale fine-tuning for large task-agnostic pre-trained models~\citep{yuan2021tokens,Zhang2022OPTOP,chung2024scaling} is inefficient and can erode pre-training gains~\citep{zhai2023investigating}. Second, storing raw exemplars raises privacy and memory concerns. Moreover, since these methods are based on the assumption of disjoint distributions, they do not explicitly address semantic overlap across tasks, which is closer to real-world settings.
Motivated by these limitations, we first introduce a new notion of \emph{continual alignment}, aiming to achieve the continual adaptation of the alignment module, which establishes the cross-modal alignment within pre-trained VLMs, to preserve high-quality cross-modal representations during sequential task learning. 
In our main topic setting, achieving continual alignment without saving raw exemplars necessitates addressing three challenges: \textbf{C1:} recurring semantics appear without task identifiers, which calls for the compositional reuse of earlier cues. \textbf{C2:} preserving established cross-modal alignment without saving raw exemplars is needed under distribution shift. \textbf{C3:} semantic overlap across tasks can trigger parameter conflict, which must be mitigated during adaptation.
To tackle these challenges, we propose \textbf{E}fficient \textbf{C}ontinual \textbf{A}lignment (ECA), an exemplar-free framework operating at the alignment module. For \textbf{C1}, we introduce the Mixture of Query (MoQ) module, which learns task-specific query tokens and composes via attention to acquire new cues with minimal disruption to prior alignment. For \textbf{C2}, we design the Dictionary Replay (DR) module, which maintains a compact embedding dictionary to capture task-agnostic visual components and replays them effectively without raw exemplars. To address \textbf{C3}, we propose the Fisher Dynamic Expansion (FeDEx), which computes a Fisher Information Matrix (FIM)–based metric, and selectively expands parallel adapters only when interference is detected, thereby preserving established alignment while allocating capacity to new topics. With the interplay of these three modules, ECA significantly alleviates catastrophic forgetting and enhances IL performance in OpenITG tasks.
In this paper, we use BLIP-2~\citep{li2023blip} as a representative model to evaluate the ECA. BLIP-2 exposes the alignment module as a Q-Former, which connects a frozen visual encoder and a frozen language model, enabling us to isolate alignment module behavior and study continual alignment in a controlled evaluation setting. The framework design remains general and can be employed to projector-based MLLMs such as LLaVA~\cite{liu2023visual}. We provide a description of this design in the Appendix~\ref{app:eca_llava}.

To study the continual alignment and evaluate ECA, we build four IL benchmarks for OpenITG based on different main topics. We name these benchmarks as Topic of Semantic for COCO Caption (ToS-COCO Caption), ToS-VQAv2, ToS-TextCaps, and ToS-TextVQA, derived from MSCOCO ImageCaption~\citep{lin2014microsoft}, VQAv2~\citep{goyal2017making}, TextCaps~\citep{sidorov2019textcaps}, and TextVQA~\citep{singh2019towards}. These benchmarks cover image captioning and open-ended VQA. In our setting, models generate text across tasks without task-specific IDs.

In summary, \textbf{our contributions include:}
1) we propose ECA, a novel exemplar-free IL approach for OpenITG that updates only the alignment module while keeping heavy backbones frozen.
To the best of our knowledge, we are the first to explicitly target preserving the cross-modal alignment of the alignment module in pre-trained VLMs during exemplar-free incremental learning for OpenITG.
2) we propose the Fisher Dynamic Expansion (FeDEx) with the Mixture of Query (MoQ) approach. These two modules adapt alignment module capacity and token composition to acquire task-specific features while preserving previously learned knowledge.
By doing so, they continually adapt the alignment module in the pre-trained VLMs to maintain the cross-modal alignment.
3) we design a novel memory mechanism, Dictionary Replay (DR)
based on sparse dictionary learning for IL in OpenITG tasks instead of saving raw exemplars.
4) we construct four benchmarks \ie ToS-COCO Caption, ToS-VQAv2, ToS-TextCaps, ToS-TextVQA, that closely mimic realistic scenarios by defining tasks according to image topics; and 5) Extensive experiments on our new benchmarks show that the proposed ECA achieves superior performance on IL for OpenITG tasks.


\vspace{-0.05in}
\section{Related Work}\label{sec:relatedwork}
\noindent \textbf{Pre-trained Vision-Language Models.} 
Traditional VLMs for OpenITG perform full-scale end-to-end training~\citep{herdade2019image,li2021align,li2022blip,wang2022end}. However, with the rise of large-scale and task-agnostic pre-trained uni-modal models~\citep{dosovitskiy2020image, brown2020language,chung2024scaling}, full fine-tuning becomes inefficient and inflexible. Recent VLMs~\citep{alayrac2022flamingo,li2023blip,liu2023visual,bai2025qwen2} adopt alignment modules to bridge frozen visual encoders and frozen Large Language Models (LLMs), realized as a Q-Former in BLIP-2–style models or as projector tokens in projector-based Multi-modal LLMs, which aligns visual features to the LLM token space and conditions the LLM on the image. However, the alignment module is sensitive to data distribution shifts and prone to catastrophic forgetting in IL scenarios~\citep{zhao2024aligngpt}. Thus, we introduce the notion of \emph{continual alignment} and then propose ECA to effectively improve the continual alignment ability of VLMs for OpenITG.

\noindent \textbf{Incremental Learning.}
Incremental Learning (IL) aims to enable models to acquire new knowledge while preserving previous knowledge. Traditional IL methods generally fall into three categories: (i) regularization-based approaches~\citep{li2017learning,kirkpatrick2017overcoming,aljundi2018memory,ahn2019uncertainty}; (ii) rehearsal-based methods~\citep{rebuffi2017icarl,douillard2020podnet,yan2021dynamically,zhu2021prototype,kong2025yooop}; and (iii) architectural-based methods~\citep{fernando2017pathnet,mallya2018packnet,douillard2022dytox,wang2022foster,wang2022beef}. Recent exemplar-free IL methods based on prompt-tuning~\citep{wang2022s,wang2022dualprompt,wang2022learning,smith2023coda} have demonstrated strong performance on uni-modal tasks such as image classification, yet their effectiveness on multi-modal tasks like OpenITG remains under-explored.
Recent IL approaches targeting OpenITG (captioning/VQA) tasks~\citep{del2020ratt,lei2023symbolic,zhang2023vqacl,chen2025coin} still face limitations. 
For example, VQACL~\citep{zhang2023vqacl} combines prototype learning with exemplar buffers, posing privacy and memory issues. Moreover, these methods typically involve extensive fine-tuning of fusion and language components, which is inefficient for large-scale pre-trained models.  
More recently,
multimodal continual instruction-tuning works adopt PEFT to adapt MLLMs to evolving \emph{textual instructions}~\citep{chen2025coin,cao2024continual,zeng2025modalprompt,guo2025hide}. For example, Continual LLaVA uses a low-rank embedding pool, CoIN and HiDe-LLaVA build on MoE-LoRA variants, and ModalPrompt designs modality-guided prompt tuning. These methods primarily focus on \emph{textual instructions} drift rather than shifts in \textit{visual topics}.
In contrast, our ECA approach effectively addresses these limitations by continuously adapting the alignment module of pre-trained VLMs without raw exemplar buffers or task-specific identifiers, explicitly targeting IL scenarios involving continuous shifts in visual semantic topics.

\noindent \textbf{IL Benchmarks for OpenITG.}
There are several IL benchmarks for OpenITG, including image captioning~\citep{del2020ratt} and open-ended VQA~\citep{zhang2023vqacl,greco2019psycholinguistics}. In these works, tasks are typically split based on disjoint image categories. However, in real-world scenarios, a single image often contains multiple objects, and its overall semantics are better characterized by its dominant visual topic. Thus, the assumption of image distributions across tasks as disjoint is unrealistic. While similar work~\citep{lei2023symbolic} splits tasks based on different scenes, focusing on the background, this approach fails to capture continuous shifts in dominant visual topics. In our work, we provide a new setting based on dominant visual topics and create four IL benchmarks for image captioning and open-ended VQA.

\begin{figure*}[!t]
\vspace{-0.1in}
\centering
\includegraphics[width=0.99\linewidth]{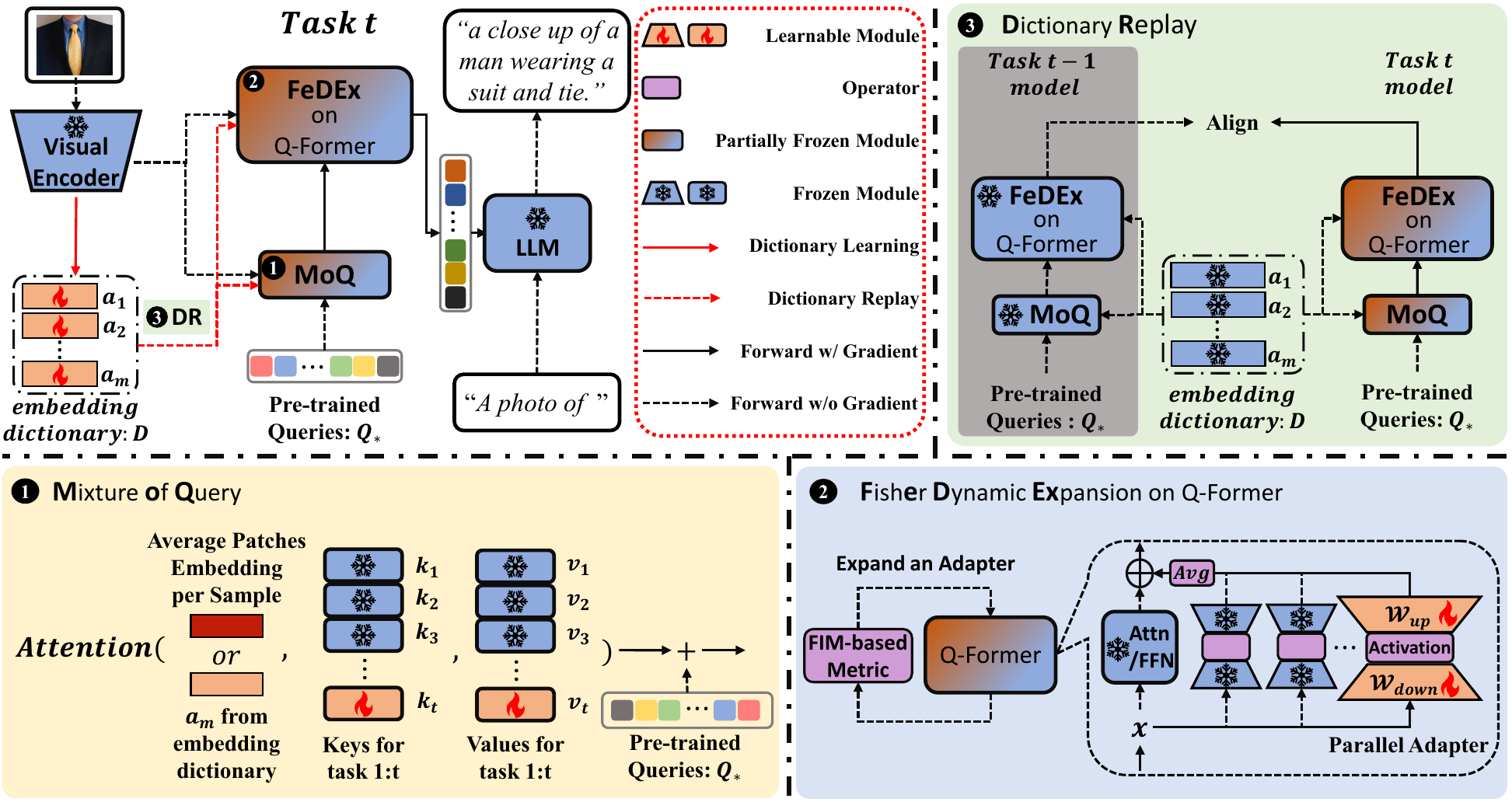}
\vspace{-0.05in}
\caption{The framework of our exemplar-free incremental learning approach, ECA, for image-to-text generation. \textbf{Upper Left:} An input image is processed by a frozen visual encoder to produce features. These features enter the \textbf{M}ixture \textbf{o}f \textbf{Q}uery module (\ding{182}), which generates query tokens to interact with the Q-Former equipped with \textbf{F}ish\textbf{e}r \textbf{D}ynamic \textbf{Ex}pansion (\ding{183}), yielding language-informative visual representations. The representations are fed to the LLM as soft visual prompts to generate text conditioned on visual context. After the current task, visual features update the embedding dictionary via sparse dictionary learning. \textbf{Upper Right:} During training, the \textbf{D}ictionary \textbf{R}eplay module (\ding{184}) replays the embedding dictionary to retain the former alignment.}
\label{fig:pipeline}
\vspace{-0.2in}
\end{figure*}

\section{Problem Formulation}
This work aims to address the problem of continual alignment in pre-trained Vision-Language Models (VLMs) that incrementally align the cross-modal representations from new data distributions in two core OpenITG tasks: Image Captioning (IC) and Visual Question Answering (VQA).

Formally, the dataset for topic-$t$ (hereafter task-$t$) is $\mathcal{D}_t=\{(X_t,P_t,S_t)\}$, where $X_t = \{x_{t,i}\}_{i=1}^{n_t}$ denotes the set of input images, $P_t = \{p_{t,i}\}_{i=1}^{n_t}$ denotes the associated text inputs (prompts or questions), and $S_t = \{s_{t,i}\}_{i=1}^{n_t}$ contains ground truth sentences for each image-text pair, with $n_t$ as the number of instances in task-$t$. Given an image $x_{t, i}$ and the corresponding text $p_{t, i}$ from dataset $\mathcal{D}_t$, the VLM generates a predicted sentence, $\hat{s}_{t,i}$. Notations are detailed in Tab.~\ref{tab:notations} in Appendix~\ref{app:notation}.

Recent VLMs~\citep{alayrac2022flamingo,li2023blip,liu2023visual,bai2025qwen2} are typically composed of three components: a visual encoder, an alignment module, and a Large Language Model (LLM). Accordingly, the sentence generation process at task-$t$ can be formalized as follows,
\begin{equation}
\label{eq:general_goal}
\begin{aligned}
P(\hat{s}_{t,i}^j &\mid \hat{s}_{t,i}^{<j}, x_{t,i}, p_{t,i}) \\
&= g_{\phi_t}\!\left(\hat{s}_{t,i}^{<j},\,[A_{\omega_t}(f_{\theta_t}(x_{t,i})),\,p_{t,i}]\right),
\end{aligned}
\end{equation}
where $f_{\theta_t}(\cdot)$, $A_{\omega_t}(\cdot)$, and $g_{\phi_t}(\cdot,\cdot)$ denote the visual encoder, the alignment module, and the LLM within the VLM at task-$t$, respectively. 
To study continual alignment at the alignment module in isolation, we decompose a VLM as $(f_{\theta},A_{\omega},g_{\phi})$, and regard BLIP-2~\citep{li2023blip} as a representative model to instantiate. Following the BLIP-2, we instantiate the alignment module $A_{\omega}$ with the Q-Former to bridge a frozen visual encoder and frozen LLM.

For BLIP-2, Eq.~\ref{eq:general_goal} can be reformalized as follows,
\begin{equation}
    \label{eq:blip2}
    \begin{aligned}
    P(\hat{s}_{t,i}^j &\mid \hat{s}_{t,i}^{<j}, x_{t,i}, p_{t,i}) \\
&= g_{\phi_\star}\!\left(\hat{s}_{t,i}^{<j},\,[A_{\omega_t,Q_t}(f_{\theta_\star}(x_{t,i})),\,p_{t,i}]\right),
\end{aligned}
\end{equation}
where $\theta_\star$ and $\phi_\star$ are frozen across tasks. The alignment module $A_{\omega_t,Q_t}$ is the Q-Former at task-$t$, parameterized by $\omega_t$ and learnable query tokens $Q_t\!\in\!\mathbb{R}^{n_Q\times d_Q}$, where $n_Q$ is the number of tokens and $d_Q$ their dimension. Query outputs are linearly projected into the LLM space by a fully connected layer in $\omega_t$, enabling cross-modal alignment with frozen backbones.

Yet the learnable parameters $Q_t$ and $\omega_t$ are sensitive to distribution shifts, which causes catastrophic forgetting in IL. We therefore introduce a continual alignment mechanism to stabilize the alignment module across tasks, mitigating forgetting and supporting efficient adaptation to new data.

\vspace{-0.05in}
\section{Proposed Method}
\label{sec:model}
\subsection{Overview of Efficient Continual Alignment}\vspace{-0.05in}
To maintain accurate cross-modal alignment over sequential tasks, we propose Efficient Continual Alignment (ECA), an exemplar-free IL approach for OpenITG, and study it at the alignment module of BLIP-2. ECA comprises three components: \ding{182} \textbf{M}ixture \textbf{o}f \textbf{Q}uery (MoQ), \ding{183} \textbf{F}ish\textbf{e}r \textbf{D}ynamic \textbf{Ex}pansion (FeDEx), and \ding{184} embedding dictionary with \textbf{D}ictionary \textbf{R}eplay (DR). As shown in Fig.~\ref{fig:pipeline}, a frozen visual encoder produces patch embeddings. MoQ learns task-specific query tokens and attentively aggregates them with the pretrained query tokens. The aggregated tokens are then passed to the alignment module (Q-Former) equipped with FeDEx. FeDEx selectively expands a parallel adapter based on an FIM-based metric, so new features are incorporated while preserving established alignment. Meanwhile, DR maintains an embedding dictionary and replays it during training to retain information from previous tasks. We detail each component below.

\subsection{Mixture of Query}\label{subsec:moq}\vspace{-0.05in}
\noindent\textbf{Motivation.} 
In VLM, the alignment module can leverage learnable query tokens $Q_t$ to expose visual evidence to the frozen LLM and establish the cross-modality alignment on the new task. In incremental learning, updating $Q_t$ based solely on the current task overwrites cues learned from past tasks, which leads to catastrophic forgetting. A straightforward solution is learning task-specific query tokens separately for each task and pick the appropriate set for a given image. However, since the visual inputs in OpenITG span a wide range of scenes and contexts, visual embeddings are widely dispersed and do not cluster into discrete categories. This dispersion makes identification of a single task-specific query token set impractical, which calls for a more flexible way to reuse and combine queries across tasks.

\noindent\textbf{MoQ.} To address this challenge, we propose a novel Mixture of Query (MoQ) module, which learns a unique set of task-specific query tokens for each task and uses an attention mechanism to dynamically aggregate them. Specifically, we first learn a set of task-specific query tokens $v_t\in \mathbb{R}^{n_Q\times d_Q}$ for each task-$t$. To determine the contributions of each $v_t$, we then introduce a task-specific key $k_t\in \mathbb{R}^{d_v}$, where $d_v$ matches the dimension of $f_{\theta_{\star}}(x_{t,i})$. For each sample in task-$t$, we obtain the image embeddings as $e_{t,i}=f_{\theta_{\star}}(x_{t,i})$, and then compute the average embedding over all the patches as $\overline{e}_{t,i}$. 
After that, we use an attention mechanism $Attention(\cdot,\cdot,\cdot)$~\citep{vaswani2017attention} and combine with fixed pre-trained query tokens $Q_{\star}$ to obtain final query tokens $Q_{t,i}$ for each image $x_{t,i}$ as below,
\begin{equation}
    \label{eq:moq}
    Q_{t,i}=Q_{\star}+Attention\left(\overline{e}_{t,i}, K_t, V_t\right),
\end{equation}
where keys $K_t$ and task-specific query tokens $V_t$ are collected as $K_t = [k_1, k_2, \dots, k_t]\in\mathbb{R}^{t\times d_v}$, $V_t = [v_1, v_2, \dots, v_t]\in\mathbb{R}^{t\times n_Q\times d_Q}$ respectively. 

To preserve distinct, task-specific $v_t$ and $k_t$, the newly learned query tokens must be uncorrelated with those from previous tasks. To reduce the interference across tasks, we fix the previously learned query tokens $V_{<t}$, keys $K_{<t}$, and the pre-trained query tokens $Q_{\star}$, and enforce an orthogonality constraint between $(v_t, k_t)$ and $(V_{<t}, K_{<t})$ as follows,
\begin{equation}
    \label{eq:ortho_loss}
    \mathcal{L}_\text{orth}(k_t,v_t)=\|v_{t}V_{<t}^\top\|^2_{F}+\|k_{t}K_{<t}^\top\|^2_{F},
\end{equation}
where $\|\cdot\|_{F}$ is Frobenius norm.

In addition, to ensure that each task-specific key $k_t$ is relevant to visual embeddings in task-$t$, we optimize the key-alignment objective as follows,
\vspace{-0.05in}
\begin{equation}
    \label{eq:cos_loss}
    \mathcal{L}_\text{key}(k_t)=\frac{1}{n_t}\sum_{i=1}^{n_t}\left(1-\frac{k_t}{\|k_t\|_{2}} \frac{\overline{e}_{t,i}^\top}{\|\overline{e}_{t,i}\|_2}\right).
    \vspace{-0.05in}
\end{equation}

Finally, we optimize the MoQ as follows,
\begin{equation}
    \label{eq:moq_loss}
    \mathcal{L}_\text{MoQ}=\mathcal{L}_\text{orth}+\mathcal{L}_\text{key}.
\end{equation}
This attention-based aggregation strategy effectively integrates task-specific query tokens, ensuring robust continual alignment across diverse tasks.

By leveraging the MoQ module to adapt the pre-trained query tokens on a per-sample basis, the Q-Former alignment module is provided with an updated query token input, denoted as $Q_{t,i}$. Consequently, the alignment module can be reformulated as $A(\cdot;\omega_t, Q_{t,i})$.

\subsection{Fisher Dynamic Expansion}\label{subsec:fedexQ}\vspace{-0.05in}
\textbf{Motivation.} Fine-tuning the entire alignment module for each task is both costly and prone to degrading pre-trained performance. We therefore adopt a Parameter-Efficient Fine-Tuning (PEFT) approach with Parallel Adapters (PAs)~\citep{he2022towards}. Training a single PA is efficient, but its limited capacity may fail to capture task-specific features without harming prior alignment. One naive remedy is to assign a new PA to every task. Yet in OpenITG, images often contain a dominant semantic category of visual objects along with additional semantic categories of contextual elements. This produces overlapping feature distributions that encourage positive transfer across tasks, and blindly expanding a PA for each task can break this beneficial sharing. Consequently, it motivates a principled criterion for when expansion is truly needed.

\noindent\textbf{FIM-Based Metric.} To address this challenge, we propose a Fisher Information Matrix (FIM)-based metric to guide the dynamic expansion of Parallel Adapters (PAs) across tasks. For OpenITG tasks, the Q-Former, $A_{\omega_t, Q_{t,i}}(\cdot)$, is commonly fine-tuned by minimizing the negative log-likelihood of the probability for correctly generating the label text conditioned on task-$t$ visual input $x_{t,i}$, text input $p_{t,i}$, and the label sentence $s_{t,i}$ as follows,
\begin{equation}
\label{eq:prob}
    \begin{aligned}
    P\big(s_{t,i}^j &\mid s_{t,i}^{<j}, x_{t,i}, p_{t,i};\omega_t, Q_{t,i}\big) \\
    &\triangleq g_{\phi_{\star}}\big(s_{t,i}^{<j}, [ A(e_{t,i};\omega_t, Q_{t,i}), p_{t,i} ] \big),
    \end{aligned}
\end{equation}
\begin{equation}
\label{eq:ce_loss}
    \mathcal{L}_\text{ce}\left(\omega_t,Q_{t,i}\right) = -\sum_{j=1}^{|s_{t,i}|}\log P\left(s_{t,i}^j|s_{t,i}^{<j},x_{t,i},p_{t,i};\omega_t, Q_{t,i}\right),
\end{equation}
which provides a basis for subsequent analysis.

Our goal is to measure how $\omega_t$ updated by Eq.~\ref{eq:ce_loss} on dataset $\mathcal{D}_{t+1}$ for task-$t+1$ affects performance on $\mathcal{D}_t$ for task-$t$. To this end, we first obtain the $\mathcal{L}_{\mathcal{D}_{t}}$ as the expectation of $\mathcal{L}_\text{ce}$ on dataset $\mathcal{D}_{t}$ by $\mathcal{L}_{\mathcal{D}_t}=\mathbb{E}_{\mathcal{D}_t}[\mathcal{L}_\text{ce}]$. Then, we derive a FIM-based metric $S(\omega_t)$ based on the second-order Taylor expansion with FIM approximation (see Appendix~\ref{app:derivation}), which reflects the conflict between the parameter updates for the new task and the preservation of prior knowledge. 

\begin{definition}\label{def:impact}
  We define the incremental and decremental impacts of a small gradient update at $\omega_t$ on dataset $\mathcal{D}_t$ as $I_+(\omega_t)\geq 0$ and $I_-(\omega_t)\leq 0$, respectively.
  \vspace{-0.1in}
\end{definition}

\begin{restatable}{theorem}{conflict}
\label{theo:conflict}
Based on Def.~\ref{def:impact}, the performance degradation on dataset $\mathcal{D}_t$ is denoted as 
$\Delta\mathcal{L}_{\mathcal{D}_t}(\Delta\omega)=I_+(\omega_t)+I_-(\omega_t)$.
Define the normalized FIM-based metric as
\begin{equation}
\label{eq:metric}
S(\omega_t) = \frac{I_{+}(\omega_t)}{I_{+}(\omega_t)+|I_{-}(\omega_t)|} \in [0,1]
\end{equation}
Then, under a small-step update and the Fisher approximation, we have
\begin{itemize}
    \item If \(S(\omega_t) \le 0.5\), training on \(\mathcal{D}_{t+1}\) does not degrade the performance on \(\mathcal{D}_t\) (i.e., \(\Delta \mathcal{L}_{\mathcal{D}_t} \le 0\)).
    \item If \(S(\omega_t) > 0.5\), the update on \(\mathcal{D}_{t+1}\) will degrade the performance on \(\mathcal{D}_t\).
\end{itemize}
\end{restatable}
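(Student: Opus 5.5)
The plan is to make Def.~\ref{def:impact} concrete through a second-order Taylor expansion, and then reduce the theorem to a sign comparison. Let $\Delta\omega$ be the small step produced by minimizing $\mathcal{L}_\text{ce}$ on $\mathcal{D}_{t+1}$. To first order, $\Delta\omega = -\eta\, g_{t+1}$ with $g_{t+1}=\nabla_\omega \mathcal{L}_{\mathcal{D}_{t+1}}(\omega_t)$. We expand the old-task loss around $\omega_t$:
\begin{equation*}
\Delta\mathcal{L}_{\mathcal{D}_t}(\Delta\omega) \approx \nabla_\omega\mathcal{L}_{\mathcal{D}_t}(\omega_t)^\top\Delta\omega + \tfrac12 \Delta\omega^\top H_t \Delta\omega .
\end{equation*}
Here we replace the Hessian $H_t$ by the empirical Fisher $F_t=\mathbb{E}_{\mathcal{D}_t}[\nabla\log P\,\nabla\log P^\top]$. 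This substitution is the ``Fisher approximation'' in the statement. Since $\mathcal{L}_{\mathcal{D}_t}$ is an expectation of negative log-likelihood, $F_t$ is positive semidefinite.

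Next, I split the resulting expression into contributions of fixed sign, which gives the explicit forms of $I_+$ and $I_-$. The quadratic term $\tfrac12\Delta\omega^\top F_t\Delta\omega$ is always $\ge 0$. I decompose the linear term $g_t^\top\Delta\omega$, with $g_t=\nabla\mathcal{L}_{\mathcal{D}_t}(\omega_t)$, coordinate-wise (or per parameter block) into positive and negative parts. Then $I_+$ collects all nonnegative contributions (the positive coordinates of $g_t\odot\Delta\omega$ together with the Fisher quadratic term), and $I_-$ collects the nonpositive coordinates. With these definitions $I_+\ge0$, $I_-\le0$, and $\Delta\mathcal{L}_{\mathcal{D}_t}=I_++I_-$ up to higher-order terms. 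This matches the decomposition asserted in the theorem.

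With the decomposition in hand, the two claims are algebra. If $I_++|I_-|=0$, then $\Delta\mathcal{L}=0$; I adopt the convention $S=0$ in that case, so the first bullet holds trivially. Otherwise the denominator is positive, and $S\in[0,1]$ is immediate. Moreover,
\begin{equation*}
S\le \tfrac12 \iff 2I_+\le I_++|I_-| \iff I_+\le |I_-| \iff I_++I_-\le 0,
\end{equation*}
so $\Delta\mathcal{L}_{\mathcal{D}_t}\le0$ and the old task does not degrade. Symmetrically, $S>\tfrac12$ gives $I_+>|I_-|$, hence $\Delta\mathcal{L}_{\mathcal{D}_t}>0$, which is degradation.

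The main obstacle is justifying that the sign of the truncated expansion equals the sign of the true $\Delta\mathcal{L}_{\mathcal{D}_t}$. This requires the small-step assumption: the remainder is $O(\|\Delta\omega\|^3)$, while $I_++I_-$ scales as $\eta$ (or $\eta^2$ if the linear term vanishes). For $\eta$ small enough and $I_++I_-\neq0$, the remainder cannot flip the sign. The boundary case $S=\tfrac12$ exactly can only be claimed up to this approximation. I would also note that the Hessian-to-Fisher replacement is exact only at a well-fit optimum of task $t$. The theorem is therefore stated ``under the Fisher approximation,'' and I would not try to control that error beyond stating it as an assumption.
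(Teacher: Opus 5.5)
Your proof is correct, and its decisive step---$S\le\tfrac12\iff I_+\le|I_-|\iff I_++I_-\le0$, with the strict analogue for $S>\tfrac12$---is exactly the paper's. The difference is in how $I_\pm$ are built. The paper drops the off-diagonal Hessian entries and uses the diagonal Fisher with a unit step. It then forms a \emph{net} per-parameter impact $I(\omega_{t,i})=-g_{t,i}\,g_{t+1,i}+\tfrac12F^i_{\mathcal{D}_t}(\omega_t)\,g_{t+1,i}^2$ and only afterwards splits by sign, so each parameter's curvature offsets its own negative linear term. You instead keep the full Fisher form, put all of it into $I_+$, and split only the linear term.

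Since $S=\tfrac12+(I_++I_-)/\bigl(2(I_++|I_-|)\bigr)$, which side of $\tfrac12$ you land on depends only on the sum, so the split is immaterial to the theorem. For a fixed quadratic term, however, your lumping enlarges $I_++|I_-|$ and pulls $S$ toward $\tfrac12$. Your sum also carries off-diagonal Fisher terms. So your $S$ is not the quantity FeDEx computes, nor the one swept over thresholds in the appendix.

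Your other additions improve on the paper, which takes $\Delta\mathcal{L}_{\mathcal{D}_t}=I_++I_-$ as given and divides by $I_++|I_-|$ without checking that it is nonzero. Your $0/0$ convention closes that hole, and your remainder estimate ties the truncated expansion to the true loss change. Note, though, that where that estimate bites ($\eta\to0$ with $g_t^\top g_{t+1}\ne0$), the Fisher term is $O(\eta^2)$ and the sign is set by $-g_t^\top g_{t+1}$ alone. If instead $g_t^\top g_{t+1}=0$, the order-$\eta^2$ sign is governed by $H_t$, not $F_t$---exactly the error you left as an assumption. The paper's unit step, by contrast, makes the Fisher term matter but gives no control of the remainder.
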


We provide derivation and proof in Appendix~\ref{app:derivation} that theoretically $S(\omega_t)\le 0.5$ guarantees non-degradation on the previous task. Moreover, Appendix~\ref{app:conflict_exp} reports $S(\omega_t)$ sweeps showing that $0.5$ yields the best performance in our settings.

\noindent\textbf{Dynamic adapter expansion.} Finally, we employ $S(\omega_t)$ to decide whether naive reuse of the current PA on $\mathcal{D}_{t+1}$ would severely impact $\mathcal{D}_t$. When the impact metric surpasses $0.5$, we expand the Q-Former with a new PA. After expanding the PA for a new task, we freeze the previously trained PAs and compute the mean of all PA outputs as the final output. 
FeDEx enables the Q-Former to retain the alignment of previous tasks while maintaining parameter efficiency in the training process.

\subsection{Dictionary Replay}\label{subsec:dictionary}
\textbf{Motivation}. Although the MoQ module and FeDEx integrate new task-specific knowledge while preserving previously learned knowledge, the absence of data from former tasks remains a critical challenge for exemplar-free IL. To address that challenge in conventional classification settings, various methods~\citep{zhu2022self, petit2023fetril, zhu2021prototype} have shown that a single prototype per category can effectively serve as a compact memory representation for data from the previous tasks. However, as discussed in Sec.~\ref {subsec:moq}, the diverse and dispersed nature of visual embeddings in OpenITG makes a single prototype insufficient.

\noindent\textbf{Embedding Dictionary Update.} To overcome this limitation, we propose an embedding dictionary, a memory mechanism that decouples visual embeddings and captures their essential components through dictionary learning. Specifically, we learn an over-complete dictionary $D\in \mathbb{R}^{m\times d_v}$ ($m\gg d_v$) so each patch embedding $e_k \in \mathbb{R}^{d_v}$ from any image can be reconstructed as a sparse linear combination of a few rows from $D$ via solving the following Lasso problem,
\begin{equation}
    \label{eq:lasso}    \alpha_{k}=\mathop{\arg\min}_{\alpha\in\mathbb{R}^{m}}\frac{1}{2}\|e_k-D_{t-1}^\top\alpha\|_{F}^2+\gamma\|\alpha\|_1, \textbf{ s.t. } \alpha\geq 0,
\end{equation}
where $\gamma$ controls the sparsity of the reconstruction coefficients $\alpha_k$.
The Lasso problem can be efficiently solved by FISTA~\citep{beck2009fast}. 

After obtaining $\alpha_{k}$, we minimize the reconstruction error to update the embedding dictionary,
\begin{equation}
    \label{eq:update_d}
    D_{t}=\mathop{\arg\min}_{D\in\mathbb{R}^{m\times d_v}}\frac{1}{2}\|e_k-D^\top\alpha_{k}\|_{F}^2, \textbf{ s.t. } \|a_{j}\|_2\leq1, \forall j\in [m],
\end{equation}
where $a_j\in\mathbb{R}^{1\times d_v}$ denotes $j$-th atom in $D$. The unit-norm constraint $\|a_j\|_2\le 1$ prevents any atom from growing too large, and removes scale ambiguity between $D$ and $\alpha_k$. With this constraint, we fix $\gamma=1$ to obtain consistently sparse codes across tasks.

\noindent\textbf{Embedding Dictionary Replay.} With the updated embedding dictionary $D_{t}$ capturing the key components of visual embeddings across $t$ tasks, we replay this dictionary to retain previous task knowledge in future task training. Let $sg(\Omega_{t})=sg(\{\omega_{t}, K_{t}, V_{t}\})$ represent the fixed updated parameters, where $sg(\cdot)$ means the stop-gradient operator. $\Omega_{t}$ includes updated Q-Former parameters, keys, and query tokens obtained after training task-$t$. To optimize the parameters for the future task-$t+1$, represented by $\Omega_{t+1}$, we apply the knowledge distillation loss as follows,
\begin{equation}
    \label{eq:kd}
    \mathcal{L}_\text{DR}(\Omega_{t+1})=\frac{1}{m} \left\| A(D_{t}; sg(\Omega_{t})) - A(D_{t}; \Omega_{t+1}) \right\|_F^2.
\end{equation}
Replaying the former data encoded in the embedding dictionary ensures that the former alignment is preserved in the future task.

\subsection{Objective Function}
Finally, to optimize the model in the task-$t$, we first use $S(\omega_t)$ in Eq.~\ref{eq:metric} to expand the Q-Former equipped with FeDEx, and then we jointly train the model parameters $\omega_t$, keys $k_t$, and values $v_t$ by the loss function with each sample in $\mathcal{D}_t$ as follows,
\begin{equation}
    \label{eq:final_loss}
    \mathcal{L}=\mathcal{L}_\text{ce}+\mathcal{L}_\text{MoQ}+\lambda\mathcal{L}_\text{DR},
\end{equation}
where $\lambda$ is a hyper-parameter that balances the contributions of the dictionary replay loss $\mathcal{L}_\text{DR}$. After training task-$t$, we apply the dictionary learning to update the embedding dictionary for embedding the essential elements of task-$t$'s visual embeddings.



\section{Experiments}\label{sec:experiment}
To assess the proposed ECA, we first construct four new IL benchmarks for OpenITG tasks, ToS-COCO Caption, ToS-VQAv2, ToS-TextCaps, ToS-TextVQA, based on two well-known image captioning datasets, \ie COCO ImageCaption~\citep{lin2014microsoft}, TextCaps~\citep{sidorov2019textcaps}, and two well-known VQA datasets, \ie VQAv2~\citep{goyal2017making}, TextVQA~\citep{singh2019towards}. Next, we compare our proposed ECA with state-of-the-art (SOTA) exemplar-free methods on four new IL benchmarks. Lastly, we perform ablation studies to explore the impact of key components.

\noindent
\textbf{Benchmarks.}
To evaluate the performance of our method, we construct four IL benchmarks for OpenITG by splitting tasks based on ``main topic'': ToS-COCO Caption and ToS-VQAv2 from COCO Caption/VQAv2, and ToS-TextCaps and ToS-TextVQA from TextCaps/TextVQA. These splits preserve realistic overlap, where earlier main topics reappear as context in later tasks Fig.~\ref{fig:dataset_dis}. Benchmark construction details are presented in Appendix~\ref{app:benchmark}.

\begin{table*}[!t]
    \centering
    \vspace{-0.05in}
    \caption{Evaluation on ToS-COCO Caption and ToS-VQAv2. \textbf{Bold}: Best results on each dataset. \underline{Underline}: Second best results on each dataset. ``Avg'': Final Average performance; ``BWT'': Backward Transfer; ``FWT'': Forward Transfer. }
    \vspace{-0.05in}
    \label{tab:ms-sota}
    \begin{adjustbox}{width=1\textwidth}
    \setlength{\tabcolsep}{1.2pt}
    \begin{tabular}{l c ccc ccc ccc c ccc}
        \toprule
        \multicolumn{1}{l}{\textbf{Tasks}} & \multicolumn{10}{c}{\textbf{ToS-COCO Caption}} & \multicolumn{4}{c}{\textbf{ToS-VQAv2}} \\ \cmidrule(lr){1-1} \cmidrule(lr){2-11} \cmidrule(lr){12-15}
        \multirow{2}{*}{\textbf{Method}} & 
        \multirow{2}{*}{\textbf{\makecell[c]{\# Trainable\\ Params}}} & 
        \multicolumn{3}{c}{\textbf{BLEU-4}} & 
        \multicolumn{3}{c}{\textbf{CIDEr}} & 
        \multicolumn{3}{c}{\textbf{SPICE}} & 
        \multirow{2}{*}{\textbf{\makecell[c]{\# Trainable\\ Params}}} & 
        \multicolumn{3}{c}{\textbf{VQA Acc}} \\
        \cmidrule(lr){3-5} \cmidrule(lr){6-8} \cmidrule(lr){9-11} \cmidrule(lr){13-15}
         & & Avg $\uparrow$ & BWT $\uparrow$ & FWT $\uparrow$  
         & Avg $\uparrow$ & BWT $\uparrow$ & FWT $\uparrow$  
         & Avg $\uparrow$ & BWT $\uparrow$ & FWT $\uparrow$
         & & Avg $\uparrow$ & BWT $\uparrow$ & FWT $\uparrow$ \\ \midrule
        ZeroShot & 0 M & 36.00 & - & - & 104.65 & - & - & 21.12 & - & - & 0 M & 48.33 & - & - \\
        Vanilla (PA) & 12.29 M & 42.70 & -1.49 & 6.48 & 123.00 & -4.50 & 19.15 & 23.39 & -0.78 & 2.64 & 21.74 M & 64.39 & -2.00 & 12.02 \\
        Vanilla (Q-Former) & 107.13 M & 42.21 & -2.02 & 6.60 & 123.66 & -4.58 & 19.23 & 23.42 & -0.85 & 2.68 & 163.82 M & 64.14 & -1.92 & 11.78 \\
        LwF~\citep{li2017learning} & 12.29 M & 42.91 & -1.07 & 6.56 & 123.88 & -3.78 & 19.20 & 23.51 & -0.69 & \underline{2.71} & 21.74 M & 64.92 & -0.99 & \underline{14.65} \\
        EWC~\citep{lee2017overcoming} & 12.29 M & 42.86 & -1.45 & 6.62 & 123.66 & -4.03 & 18.73 & \underline{23.55} & -0.57 & 2.55 & 21.74 M & 59.63 & -2.32 & 11.33 \\
        Dual-Prompt~\citep{wang2022dualprompt} & 14.30 M & 43.03 & \textbf{-0.62} & 6.77 & 123.59 & \underline{-1.60} & 19.04 & 23.47 & -0.52 & 2.65 & 21.67 M & 65.03 & 1.27 & 12.74 \\
        CODA-Prompt~\citep{smith2023coda} & 15.41 M & \underline{43.10} & -0.67 & \underline{6.90} & \underline{124.20} & \textbf{-1.19} & \underline{19.44} & 23.52 & \underline{-0.38} & 2.59 & 24.37 M & \underline{65.64} & \underline{1.38} & 13.71 \\
        MoE-LoRA~\citep{liu2024moe} & 98.84 M & 42.20 & -1.56 & 6.25 & 122.77 & -3.53 & 17.76 & 23.39 & -0.68 & 2.51 & 195.71 M & 61.02 & -3.90 & 10.27 \\
        \midrule
        \textbf{ECA (Ours)} & 12.29 M & 
                              \textbf{43.42} & \underline{-0.64} & \textbf{7.39} & 
                              \textbf{125.56} & -1.86 & \textbf{20.58} & 
                              \textbf{23.80} & \textbf{-0.35} & \textbf{3.00} & 
                              21.74 M &
                              \textbf{68.05} & \textbf{1.81} & \textbf{16.38} \\
        \midrule
        Upper-bound (PA) & 12.29 M & 43.94 & - & - & 126.91 & - & - & 24.18 & - & - & 21.74 M & 68.18 & - & - \\
        Upper-bound (Q-Former) & 107.13 M & 43.82 & - & - & 126.19 & - & - & 24.10 & - & - & 163.82 M & 68.52 & - & - \\
        \bottomrule
    \end{tabular}
    \end{adjustbox}
    \vspace{-0.15in}
\end{table*}

\noindent
\textbf{Protocol.} Following the common protocol of the OpenITG tasks~\citep{li2023blip,del2020ratt,antol2015vqa}, we use metrics BLEU@4, CIDEr, and SPICE for Image Captioning tasks, and VQA Accuracy for open-ended VQA. 
We assess IL performance using three metrics: \emph{Average Performance} (Avg), \emph{Forward Transfer} (FWT), and \emph{Backward Transfer} (BWT), following the protocol in~\citep{lopez2017gradient}. We report these metrics to summarize final performance across all tasks, quantify how learning later tasks affects earlier ones, and measure transfer to unseen tasks, respectively. Definitions are in Appendix~\ref{app:ep_protocol}. 

\noindent
\textbf{Baselines.} 
We apply a pre-trained BLIP-2~\citep{li2023blip} as a backbone, and instantiate \emph{all methods} on the Q-Former from the BLIP-2 with the visual encoder and the LLM frozen for fairness. Under IL settings, we evaluate ``ZeroShot,'' ``Vanilla (PA),'' finetuning one Parallel Adapter (PA) on Q-Former sequentially, ``Upper-bound (PA),'' jointly finetuning one PA across tasks, ``Vanilla (Q-Former),'' finetuning the Q-Former sequentially, and ``Upper-bound (Q-Former),'' jointly finetuning the Q-Former across tasks. To show our performance, we also compare several state-of-the-art exemplar-free IL methods, originally developed for uni-modal tasks, namely EWC~\citep{lee2017overcoming}, LwF~\citep{li2017learning}, CODA-Prompt~\citep{smith2023coda}, Dual-Prompt~\citep{wang2022dualprompt}, and one multi-modal IL method, MoE-LoRA~\citep{liu2024moe} (following~\citep{chen2025coin}). LwF~\citep{li2017learning} and EWC~\citep{lee2017overcoming} are applied to the ``Vanilla (PA)'' under the same trainable scope as ECA for fair comparison, while other methods follow their original configuration. 
Full training details are provided in Appendix~\ref{app:ep_detail}.
\subsection{Main Results}\label{subsec:main_result}
In this section, we compare the overall performance of our proposed method, ECA, with various baselines on our proposed benchmarks under the IL setting.

\begin{table*}[!t]
    \centering
    \vspace{-0.05in}
    \caption{Evaluation on ToS-TextCaps and ToS-TextVQA. \textbf{Bold}: Best results on each dataset. \underline{Underline}: Second best results on each dataset. ``Avg'': Final Average performance; ``BWT'': Backward Transfer; ``FWT'': Forward Transfer.}
    \vspace{-0.05in}
    \label{tab:text-sota}
    \begin{adjustbox}{width=1\textwidth}
    \setlength{\tabcolsep}{1.2pt}
    \begin{tabular}{l c ccc ccc ccc ccc}
        \toprule
        \multicolumn{2}{l}{\textbf{Tasks}} & \multicolumn{9}{c}{\textbf{ToS-TextCaps}} & \multicolumn{3}{c}{\textbf{ToS-TextVQA}} \\ \cmidrule(lr){1-2} \cmidrule(lr){3-11} \cmidrule(lr){12-14}
        \multirow{2}{*}{\textbf{Method}} & 
        \multirow{2}{*}{\textbf{\makecell[c]{\# Trainable\\ Params}}} & 
        \multicolumn{3}{c}{\textbf{BLEU-4}} & 
        \multicolumn{3}{c}{\textbf{CIDEr}} & 
        \multicolumn{3}{c}{\textbf{SPICE}} & 
        \multicolumn{3}{c}{\textbf{VQA Acc}} \\
        \cmidrule(lr){3-5} \cmidrule(lr){6-8} \cmidrule(lr){9-11} \cmidrule(lr){12-14}
        & & Avg $\uparrow$ & BWT $\uparrow$ & FWT $\uparrow$ 
        & Avg $\uparrow$ & BWT $\uparrow$ & FWT $\uparrow$ 
        & Avg $\uparrow$ & BWT $\uparrow$ & FWT $\uparrow$ 
        & Avg $\uparrow$ & BWT $\uparrow$ & FWT $\uparrow$ \\
        \midrule
        ZeroShot & 0 M & 13.99 & - & - & 48.65 & - & - & 11.48 & - & - & 14.83 & - & - \\
        Vanilla (PA) & 21.74 M & 24.50 & -3.14 & 9.89 & 89.39 & -2.76 & 33.61 & 15.19 & -1.47 & 3.32 & 24.94 & -3.66 & 10.85 \\
        Vanilla (Q-Former) & 163.82 M & 26.98 & -1.56 & 8.66 & 93.63 & -1.84 & 28.90 & 15.90 & -0.87 & 3.29 & 32.13 & -1.37 & 15.03 \\
        LwF~\citep{li2017learning} & 21.74 M & \underline{27.89} & -0.59 & \underline{10.89} & \underline{97.46} & \underline{1.90} & \underline{36.46} & \underline{16.19} & -0.05 & \underline{3.42} & \underline{32.92} & -0.53 & \underline{15.52} \\
        EWC~\citep{lee2017overcoming} & 21.74 M & 23.90 & -2.67 & 9.16 & 86.98 & -5.87 & 32.34 & 14.60 & -1.13 & 2.64 & 30.21 & 0.54 & 11.67 \\
        Dual-Prompt~\citep{wang2022dualprompt} & 19.82 M & 23.67 & \underline{-0.34} & 6.34 & 83.47 & 1.15 & 22.12 & 14.69 & \textbf{0.79} & 2.01 & 25.64 & \underline{1.75} & 8.14 \\ 
        CODA-Prompt~\citep{smith2023coda} & 22.13 M & 24.81 & -0.48 & 7.27 & 86.33 & 1.08 & 24.70 & 15.29 & \underline{0.77} & 2.31 & 26.13 & 1.35 & 8.83 \\
        MoE-LoRA~\citep{liu2024moe} & 195.71 M & 25.16 & -0.94 & 10.10 & 90.58 & -1.54 & 34.39 & 15.80 & 0.11 & 3.24 & 31.76 & -3.20 & 12.40 \\
        \midrule
        \textbf{ECA (Ours)} & 21.74 M & \textbf{30.05} & \textbf{-0.18} & \textbf{12.13} & \textbf{103.03} & \textbf{1.94} & \textbf{39.22} & \textbf{16.86} & 0.14 & \textbf{4.39} & \textbf{38.13} & \textbf{2.36} & \textbf{19.30} \\
        \midrule
        Upper-bound (PA) & 21.74 M & 30.59 & - & - & 110.49 & - & - & 17.78 & - & - & 41.05 & - & - \\
        Upper-bound (Q-Former) & 163.82 M & 31.32 & - & - & 111.99 & - & - & 18.02 & - & - & 46.02 & - & - \\
        \bottomrule
    \end{tabular}
    \end{adjustbox}
    \vspace{-0.15in}
\end{table*}
\noindent
\textbf{Evaluation on ToS-COCO Caption and ToS-VQAv2.}
As shown in Tab.~\ref{tab:ms-sota}, ECA significantly outperforms other baselines in terms of Avg, BWT, and FWT on both caption metrics and VQA accuracy, while using fewer trainable parameters.
On ToS-COCO Caption, since BLIP-2 is pre-trained on COCO Caption, the absolute gain is modest, yet the Upper-Bound Gap Closed (UBGC)\footnote{\scriptsize\textbf{UBGC}
$=(\text{ECA}-\text{method})/(\text{Upper-bound (PA)}-\text{method})\times100\%$;
UB is the oracle joint-training upper bound on the union of tasks.} is remarkable. ECA’s UBGC relative to CODA-Prompt, the SOTA uni-modal exemplar-free IL method, in Avg are $38.10\%$, $50.18\%$, and $42.42\%$ for BLEU-4, CIDEr, and SPICE, respectively. Although ECA has slightly lower BWT on CIDEr than CODA-Prompt and Dual-Prompt, its higher Avg and closeness to the upper bound better reflect final IL quality. For ToS-VQAv2, whose annotations are not part of BLIP-2’s pre-training, ECA is only $0.13$ below the \textbf{upper-bound} performance and surpasses CODA-Prompt by $2.41$ in Avg for VQA accuracy.

\noindent
\textbf{Evaluation on ToS-TextCaps and ToS-TextVQA}
These two benchmarks form a harder continual-alignment setting because BLIP-2 was not pre-trained on them, and successful predictions rely on OCR tokens that must interact with visual features through cross-attention. As shown in Tab.~\ref{tab:text-sota}, ECA remains strong on both datasets. On ToS-TextCaps, compared with LwF as the best uni-modal exemplar-free baseline and with the ``Upper-bound (PA)'' which uses the same trainable scope under joint training, ECA improves Avg by $2.16$, $5.57$, and $0.67$ for BLEU-4, CIDEr, and SPICE, with UBGC of $80.00\%$, $42.74\%$, and $42.14\%$. On ToS-TextVQA, ECA improves Avg VQA accuracy over LwF by $5.21$, with a UBGC of $64.08\%$.


\noindent
\textbf{Findings}
We further examine these exemplar-free methods and observe several notable points:
\textbf{(1).} As shown in Tab.~\ref{tab:ms-sota} and Tab.~\ref{tab:text-sota}, LwF surpasses prompt based baselines on ToS-TextCaps and ToS-TextVQA. These datasets require reasoning over OCR tokens, which induces a larger distribution shift beyond the pretraining regime. Prompt pools do not explicitly preserve the alignment for the newly introduced tokens, whereas LwF maintains the cross-modal alignment through knowledge distillation.
\textbf{(2).} EWC is misaligned with our main topic setting. Classical EWC presumes disjoint tasks and thus restricts updates to parameters deemed important for past tasks. In our scenario, cross-task semantic overlap means such updates can be beneficial. Enforcing these constraints suppresses useful sharing and harms transfer. As shown in Tab.~\ref{tab:text-sota}, EWC even underperforms the `Vanilla (PA).''
\textbf{(3).} As shown in Tab.~\ref{tab:ms-sota} and Tab.~\ref{tab:text-sota}, ECA obtains impressive BWT and FWT across all metrics on four datasets. This indicates that ECA not only mitigates catastrophic forgetting but also uses prior knowledge to better handle future tasks. Namely, ECA attains an understanding of tasks with different main topics and generalizes well to upcoming tasks.
\textbf{(4).} As shown in Tab.~\ref{tab:ms-sota} and Tab.~\ref{tab:text-sota}, ECA uses almost the same number of trainable parameters as the baselines with single PA, yet consistently outperforms them and even methods with much larger parameter budgets. This suggests that ECA better exploits a limited alignment capacity rather than blindly increasing model size and computational cost. Furthermore, we show its parameter and inference efficiency in Appendix~\ref{app:param-eff}. In sum, ECA delivers superior overall performance and achieves continual alignment for IL in OpenITG with efficient parameter usage.

In addition, to validate generality beyond Q-Former-based VLMs, we evaluate ECA on the projector-based MLLM LLaVA-v0 and compare with PA baselines, MoE-LoRA, and the recent ModalPrompt~\cite{zeng2025modalprompt} in Appendix~\ref{app:eca_llava}. Additional case studies are in Appendix~\ref{app:case_study}.

\vspace{-0.05in}
\subsection{Ablation Studies}
\textbf{Effect of key components \& Hyper-parameters.}
We first study the effect of key components of ECA in Tab.~\ref{tab:ablation}. MoQ improves Avg and BWT over ``Vanilla (PA)'' by sharing orthogonal query tokens across tasks, DR further boosts, especially FWT via dictionary replay, and FeDEx mitigates forgetting by expanding adapters only when needed. The analyses are in Appendix~\ref{app:add-ablation}. Additional ablation for losses in MoQ is in Tab.~\ref{tab:ablation_moqloss}, Appendix~\ref{app:add-ablation}. Moreover, we study the influence of hyper-parameters, \ie DR weight $\lambda$, DR's embedding dictionary atom number $m$ in Appendix~\ref{app:hyper-parameter}.

\begin{table}[!t]
    \centering
    \caption{Ablation study on ToS-COCO Caption.\textbf{``Naive-Q:''} per-task query tokens without cross-task sharing (one set per task). \textbf{``DR(r):''} replay a randomly initialized dictionary (no dictionary).}
    \label{tab:ablation}
    \setlength{\tabcolsep}{1.5pt}
    \begin{adjustbox}{width=\linewidth}
    \begin{tabular}{l ccc ccc ccc}
        \toprule
        \multirow{2}{*}{\textbf{Method}} & 
        \multicolumn{3}{c}{\textbf{BLEU-4}} & 
        \multicolumn{3}{c}{\textbf{CIDEr}} & 
        \multicolumn{3}{c}{\textbf{SPICE}} \\
        \cmidrule(lr){2-4} \cmidrule(lr){5-7} \cmidrule(lr){8-10}
         & Avg $\uparrow$ & BWT $\uparrow$ & FWT $\uparrow$  
         & Avg $\uparrow$ & BWT $\uparrow$ & FWT $\uparrow$  
         & Avg $\uparrow$ & BWT $\uparrow$ & FWT $\uparrow$ \\
        \midrule
        Vanilla (PA)           & 42.70 & -1.49 & 6.48 & 123.00 & -4.50 & 19.15 & 23.39 & -0.78 & 2.64 \\
        PA+Naive-Q         & 42.37 & -1.88 & 6.91 &122.74 & -4.27 & 19.29 & 23.33 & -0.79 & 2.82 \\
        PA+MoQ              & 42.80 & -1.25 & 6.77 & 123.67 & -3.66 & 19.05 & 23.47 & -0.59 & 2.71 \\
        PA+MoQ+DR           & 42.97 & -1.16 & 7.28 & 124.57 & -2.80 & 20.45 & 23.59 & -0.54 & 2.96 \\
        PA+MoQ+DR(r)   & 42.49 & -1.57 & 7.24 & 123.24 & -3.75 & 19.88 & 23.57 & -0.66 & 2.86 \\
        PA+MoQ+FeDEx       & 43.22	& -0.72 & 7.05 & 124.95 & -2.04 & 19.72 & 23.69 & -0.42 & 2.83 \\
        \midrule
        \textbf{ECA (Ours)} & \textbf{43.42} & \textbf{-0.64} & \textbf{7.39} & 
                              \textbf{125.56} & \textbf{-1.86} & \textbf{20.58} & 
                              \textbf{23.80} & \textbf{-0.35} & \textbf{3.00} \\
        \bottomrule
    \end{tabular}
    \end{adjustbox}
    \vspace{-0.1in}
\end{table}
\section{Conclusion and Future Work}\label{sec:conclusion}

In this work, we introduced the notion of continual alignment for incremental learning in open-ended image-to-text generation and proposed Efficient Continual Alignment (ECA), an exemplar-free framework for adapting the alignment module in VLMs. ECA employs three key components, the Mixture of Query (MoQ) module, Fisher Dynamic Expansion (FeDEx), and Dictionary Replay (DR), which enable a pre-trained VLM to acquire new task-specific features while preserving robust cross-modal alignment without storing exemplars. We also constructed four new IL benchmarks that reflect realistic distribution shifts in OpenITG, and experiments show that ECA can significantly mitigate catastrophic forgetting and achieve strong performance with high parameter efficiency. While these results are encouraging, DR maintains a fixed-size embedding dictionary across tasks in an online setting. This dictionary may be insufficient for very long task sequences with highly diverse visual distributions, and frequently reused atoms may be updated by later tasks and overwrite earlier representations. Our current instantiation also assumes a reasonably strong pre-trained VLM backbone that provides high-quality representations. Future work includes extending DR to a dynamic and adaptive dictionary, and extending ECA to weaker backbones or coupling pre-training with continual alignment.



\section*{Acknowledgments}
Research reported in this paper was sponsored in part by NSF CPS 2311086, NSF CIRC 716152, NSF RITEL 2506890, NAIRR 250288, and Faculty Research Grant at William \& Mary 141446. Part of this work was conducted while Jiangtao Kong was an intern at \mbox{JPMorganChase}. We thank the Global Technology Applied Research center of \mbox{JPMorganChase} for helpful support.

\section*{Impact Statement}
This paper studies exemplar-free incremental learning for open-ended image-to-text generation. We update only the alignment module of a pre-trained vision-language model such as BLIP-2. We evaluate methods on our proposed  IL OpenITG benchmarks, which are built on public datasets, in Sec.~\ref{sec:experiment}. These datasets include MSCOCO Caption, VQAv2, TextCaps, and TextVQA. We do not collect new human data. We do not use private datasets. We do not design or evaluate methods for identifying individuals. We expect the main impact to be scientific. Our setting, benchmarks, and method can support more rigorous evaluation of continual multimodal generation. As with vision-language generation methods in general, downstream use should still consider robustness and bias.

\section*{Disclaimer}
This paper was prepared for informational purposes by the Global Technology Applied Research center of JPMorganChase. This paper is not a product of the Research Department of JPMorganChase or its affiliates. Neither JPMorganChase nor any of its affiliates makes any explicit or implied representation or warranty, and none of them accepts any liability in connection with this paper, including, without limitation, with respect to the completeness, accuracy, or reliability of the information contained herein and the potential legal, compliance, tax, or accounting effects thereof. This document is not intended as investment research or investment advice, or as a recommendation, offer, or solicitation for the purchase or sale of any security, financial instrument, financial product, or service, or to be used in any way for evaluating the merits of participating in any transaction.


\bibliography{icml2026}

@inproceedings{radford2021learning,
  title={Learning transferable visual models from natural language supervision},
  author={Radford, Alec and Kim, Jong Wook and Hallacy, Chris and Ramesh, Aditya and Goh, Gabriel and Agarwal, Sandhini and Sastry, Girish and Askell, Amanda and Mishkin, Pamela and Clark, Jack and others},
  booktitle={International conference on machine learning},
  pages={8748--8763},
  year={2021},
  organization={PMLR}
}

@article{alayrac2022flamingo,
  title={Flamingo: a visual language model for few-shot learning},
  author={Alayrac, Jean-Baptiste and Donahue, Jeff and Luc, Pauline and Miech, Antoine and Barr, Iain and Hasson, Yana and Lenc, Karel and Mensch, Arthur and Millican, Katherine and Reynolds, Malcolm and others},
  journal={Advances in neural information processing systems},
  volume={35},
  pages={23716--23736},
  year={2022}
}

@inproceedings{li2022blip,
  title={Blip: Bootstrapping language-image pre-training for unified vision-language understanding and generation},
  author={Li, Junnan and Li, Dongxu and Xiong, Caiming and Hoi, Steven},
  booktitle={International conference on machine learning},
  pages={12888--12900},
  year={2022},
  organization={PMLR}
}

@inproceedings{li2023blip,
  title={Blip-2: Bootstrapping language-image pre-training with frozen image encoders and large language models},
  author={Li, Junnan and Li, Dongxu and Savarese, Silvio and Hoi, Steven},
  booktitle={International conference on machine learning},
  pages={19730--19742},
  year={2023},
  organization={PMLR}
}

@inproceedings{antol2015vqa,
  title={Vqa: Visual question answering},
  author={Antol, Stanislaw and Agrawal, Aishwarya and Lu, Jiasen and Mitchell, Margaret and Batra, Dhruv and Zitnick, C Lawrence and Parikh, Devi},
  booktitle={Proceedings of the IEEE international conference on computer vision},
  pages={2425--2433},
  year={2015}
}

@inproceedings{vinyals2015show,
  title={Show and tell: A neural image caption generator},
  author={Vinyals, Oriol and Toshev, Alexander and Bengio, Samy and Erhan, Dumitru},
  booktitle={Proceedings of the IEEE conference on computer vision and pattern recognition},
  pages={3156--3164},
  year={2015}
}

@inproceedings{he2022towards,
    title={Towards a Unified View of Parameter-Efficient Transfer Learning},
    author={Junxian He and Chunting Zhou and Xuezhe Ma and Taylor Berg-Kirkpatrick and Graham Neubig},
    booktitle={International Conference on Learning Representations},
    year={2022},
    url={https://openreview.net/forum?id=0RDcd5Axok}
}

@inproceedings{zhu2022self,
  title={Self-sustaining representation expansion for non-exemplar class-incremental learning},
  author={Zhu, Kai and Zhai, Wei and Cao, Yang and Luo, Jiebo and Zha, Zheng-Jun},
  booktitle={Proceedings of the IEEE/CVF Conference on Computer Vision and Pattern Recognition},
  pages={9296--9305},
  year={2022}
}

@inproceedings{petit2023fetril,
  title={Fetril: Feature translation for exemplar-free class-incremental learning},
  author={Petit, Gr{\'e}goire and Popescu, Adrian and Schindler, Hugo and Picard, David and Delezoide, Bertrand},
  booktitle={Proceedings of the IEEE/CVF winter conference on applications of computer vision},
  pages={3911--3920},
  year={2023}
}

@inproceedings{zhu2021prototype,
  title={Prototype augmentation and self-supervision for incremental learning},
  author={Zhu, Fei and Zhang, Xu-Yao and Wang, Chuang and Yin, Fei and Liu, Cheng-Lin},
  booktitle={Proceedings of the IEEE/CVF Conference on Computer Vision and Pattern Recognition},
  pages={5871--5880},
  year={2021}
}

@article{beck2009fast,
  title={A fast iterative shrinkage-thresholding algorithm for linear inverse problems},
  author={Beck, Amir and Teboulle, Marc},
  journal={SIAM journal on imaging sciences},
  volume={2},
  number={1},
  pages={183--202},
  year={2009},
  publisher={SIAM}
}

@article{del2020ratt,
  title={Ratt: Recurrent attention to transient tasks for continual image captioning},
  author={Del Chiaro, Riccardo and Twardowski, Bart{\l}omiej and Bagdanov, Andrew and Van de Weijer, Joost},
  journal={Advances in Neural Information Processing Systems},
  volume={33},
  pages={16736--16748},
  year={2020}
}

@inproceedings{zhang2023vqacl,
  title={Vqacl: A novel visual question answering continual learning setting},
  author={Zhang, Xi and Zhang, Feifei and Xu, Changsheng},
  booktitle={Proceedings of the IEEE/CVF Conference on Computer Vision and Pattern Recognition},
  pages={19102--19112},
  year={2023}
}

@inproceedings{lei2023symbolic,
  title={Symbolic replay: Scene graph as prompt for continual learning on vqa task},
  author={Lei, Stan Weixian and Gao, Difei and Wu, Jay Zhangjie and Wang, Yuxuan and Liu, Wei and Zhang, Mengmi and Shou, Mike Zheng},
  booktitle={Proceedings of the AAAI Conference on Artificial Intelligence},
  volume={37},
  number={1},
  pages={1250--1259},
  year={2023}
}

@InProceedings{Ramos_2023_CVPR,
    author    = {Ramos, Rita and Martins, Bruno and Elliott, Desmond and Kementchedjhieva, Yova},
    title     = {SmallCap: Lightweight Image Captioning Prompted With Retrieval Augmentation},
    booktitle = {Proceedings of the IEEE/CVF Conference on Computer Vision and Pattern Recognition (CVPR)},
    month     = {June},
    year      = {2023},
    pages     = {2840-2849}
}

@article{herdade2019image,
  title={Image captioning: Transforming objects into words},
  author={Herdade, Simao and Kappeler, Armin and Boakye, Kofi and Soares, Joao},
  journal={Advances in neural information processing systems},
  volume={32},
  year={2019}
}

@inproceedings{Xu2015ShowAA,
  title={Show, Attend and Tell: Neural Image Caption Generation with Visual Attention},
  author={Ke Xu and Jimmy Ba and Ryan Kiros and Kyunghyun Cho and Aaron C. Courville and Ruslan Salakhutdinov and Richard S. Zemel and Yoshua Bengio},
  booktitle={International Conference on Machine Learning},
  year={2015},
  url={https://api.semanticscholar.org/CorpusID:1055111}
}

@inproceedings{xu-etal-2020-open,
    title = "Open-Ended Visual Question Answering by Multi-Modal Domain Adaptation",
    author = "Xu, Yiming  and
      Chen, Lin  and
      Cheng, Zhongwei  and
      Duan, Lixin  and
      Luo, Jiebo",
    editor = "Cohn, Trevor  and
      He, Yulan  and
      Liu, Yang",
    booktitle = "Findings of the Association for Computational Linguistics: EMNLP 2020",
    month = nov,
    year = "2020",
    address = "Online",
    publisher = "Association for Computational Linguistics",
    url = "https://aclanthology.org/2020.findings-emnlp.34",
    doi = "10.18653/v1/2020.findings-emnlp.34",
    pages = "367--376",
}

@inproceedings{fu-etal-2023-generate,
    title = "Generate then Select: Open-ended Visual Question Answering Guided by World Knowledge",
    author = "Fu, Xingyu  and
      Zhang, Sheng  and
      Kwon, Gukyeong  and
      Perera, Pramuditha  and
      Zhu, Henghui  and
      Zhang, Yuhao  and
      Li, Alexander Hanbo  and
      Wang, William Yang  and
      Wang, Zhiguo  and
      Castelli, Vittorio  and
      Ng, Patrick  and
      Roth, Dan  and
      Xiang, Bing",
    editor = "Rogers, Anna  and
      Boyd-Graber, Jordan  and
      Okazaki, Naoaki",
    booktitle = "Findings of the Association for Computational Linguistics: ACL 2023",
    month = jul,
    year = "2023",
    address = "Toronto, Canada",
    publisher = "Association for Computational Linguistics",
    url = "https://aclanthology.org/2023.findings-acl.147",
    doi = "10.18653/v1/2023.findings-acl.147",
    pages = "2333--2346",
}

@incollection{mccloskey1989catastrophic,
  title={Catastrophic interference in connectionist networks: The sequential learning problem},
  author={McCloskey, Michael and Cohen, Neal J},
  booktitle={Psychology of learning and motivation},
  volume={24},
  pages={109--165},
  year={1989},
  publisher={Elsevier}
}

@inproceedings{qian2023decouple,
  title={Decouple before interact: Multi-modal prompt learning for continual visual question answering},
  author={Qian, Zi and Wang, Xin and Duan, Xuguang and Qin, Pengda and Li, Yuhong and Zhu, Wenwu},
  booktitle={Proceedings of the IEEE/CVF International Conference on Computer Vision},
  pages={2953--2962},
  year={2023}
}

@inproceedings{yuan2021tokens,
  title={Tokens-to-token vit: Training vision transformers from scratch on imagenet},
  author={Yuan, Li and Chen, Yunpeng and Wang, Tao and Yu, Weihao and Shi, Yujun and Jiang, Zi-Hang and Tay, Francis EH and Feng, Jiashi and Yan, Shuicheng},
  booktitle={Proceedings of the IEEE/CVF international conference on computer vision},
  pages={558--567},
  year={2021}
}

@article{Zhang2022OPTOP,
  title={OPT: Open Pre-trained Transformer Language Models},
  author={Susan Zhang and Stephen Roller and Naman Goyal and Mikel Artetxe and Moya Chen and Shuohui Chen and Christopher Dewan and Mona T. Diab and Xian Li and Xi Victoria Lin and Todor Mihaylov and Myle Ott and Sam Shleifer and Kurt Shuster and Daniel Simig and Punit Singh Koura and Anjali Sridhar and Tianlu Wang and Luke Zettlemoyer},
  journal={ArXiv},
  year={2022},
  volume={abs/2205.01068},
  url={https://api.semanticscholar.org/CorpusID:248496292}
}

@article{chung2024scaling,
  title={Scaling instruction-finetuned language models},
  author={Chung, Hyung Won and Hou, Le and Longpre, Shayne and Zoph, Barret and Tay, Yi and Fedus, William and Li, Yunxuan and Wang, Xuezhi and Dehghani, Mostafa and Brahma, Siddhartha and others},
  journal={Journal of Machine Learning Research},
  volume={25},
  number={70},
  pages={1--53},
  year={2024}
}

@article{zhao2024aligngpt,
  title={AlignGPT: Multi-modal Large Language Models with Adaptive Alignment Capability},
  author={Zhao, Fei and Pang, Taotian and Li, Chunhui and Wu, Zhen and Guo, Junjie and Xing, Shangyu and Dai, Xinyu},
  journal={arXiv preprint arXiv:2405.14129},
  year={2024}
}

@inproceedings{zhai2023investigating,
    title={Investigating the Catastrophic Forgetting in Multimodal Large Language Model Fine-Tuning},
    author={Yuexiang Zhai and Shengbang Tong and Xiao Li and Mu Cai and Qing Qu and Yong Jae Lee and Yi Ma},
    booktitle={Conference on Parsimony and Learning (Proceedings Track)},
    year={2023},
    url={https://openreview.net/forum?id=g7rMSiNtmA}
}

@inproceedings{karpathy2015deep,
  title={Deep visual-semantic alignments for generating image descriptions},
  author={Karpathy, Andrej and Fei-Fei, Li},
  booktitle={Proceedings of the IEEE conference on computer vision and pattern recognition},
  pages={3128--3137},
  year={2015}
}

@inproceedings{lin2014microsoft,
  title={Microsoft coco: Common objects in context},
  author={Lin, Tsung-Yi and Maire, Michael and Belongie, Serge and Hays, James and Perona, Pietro and Ramanan, Deva and Doll{\'a}r, Piotr and Zitnick, C Lawrence},
  booktitle={Computer Vision--ECCV 2014: 13th European Conference, Zurich, Switzerland, September 6-12, 2014, Proceedings, Part V 13},
  pages={740--755},
  year={2014},
  organization={Springer}
}

@inproceedings{goyal2017making,
  title={Making the v in vqa matter: Elevating the role of image understanding in visual question answering},
  author={Goyal, Yash and Khot, Tejas and Summers-Stay, Douglas and Batra, Dhruv and Parikh, Devi},
  booktitle={Proceedings of the IEEE conference on computer vision and pattern recognition},
  pages={6904--6913},
  year={2017}
}

@article{kirkpatrick2017overcoming,
  title={Overcoming catastrophic forgetting in neural networks},
  author={Kirkpatrick, James and Pascanu, Razvan and Rabinowitz, Neil and Veness, Joel and Desjardins, Guillaume and Rusu, Andrei A and Milan, Kieran and Quan, John and Ramalho, Tiago and Grabska-Barwinska, Agnieszka and others},
  journal={Proceedings of the national academy of sciences},
  volume={114},
  number={13},
  pages={3521--3526},
  year={2017},
  publisher={National Acad Sciences}
}

@article{sidorov2019textcaps,
    title={TextCaps: a Dataset for Image Captioning with Reading Comprehension},
    author={Sidorov, Oleksii and Hu, Ronghang and Rohrbach, Marcus and Singh, Amanpreet},
    journal={European Conference on Computer Vision},
    year={2020}
}

@inproceedings{singh2019towards,
    title={Towards VQA Models That Can Read},
    author={Singh, Amanpreet and Natarjan, Vivek and Shah, Meet and Jiang, Yu and Chen, Xinlei and Parikh, Devi and Rohrbach, Marcus},
    booktitle={Proceedings of the IEEE Conference on Computer Vision and Pattern Recognition},
    pages={8317-8326},
    year={2019}
}

@inproceedings{wang2022end,
  title={End-to-end transformer based model for image captioning},
  author={Wang, Yiyu and Xu, Jungang and Sun, Yingfei},
  booktitle={Proceedings of the AAAI conference on artificial intelligence},
  volume={36},
  number={3},
  pages={2585--2594},
  year={2022}
}

@article{li2021align,
  title={Align before fuse: Vision and language representation learning with momentum distillation},
  author={Li, Junnan and Selvaraju, Ramprasaath and Gotmare, Akhilesh and Joty, Shafiq and Xiong, Caiming and Hoi, Steven Chu Hong},
  journal={Advances in neural information processing systems},
  volume={34},
  pages={9694--9705},
  year={2021}
}

@article{dosovitskiy2020image,
  title={An image is worth 16x16 words: Transformers for image recognition at scale},
  author={Dosovitskiy, Alexey and Beyer, Lucas and Kolesnikov, Alexander and Weissenborn, Dirk and Zhai, Xiaohua and Unterthiner, Thomas and Dehghani, Mostafa and Minderer, Matthias and Heigold, Georg and Gelly, Sylvain and others},
  journal={arXiv preprint arXiv:2010.11929},
  year={2020}
}

@article{brown2020language,
  title={Language models are few-shot learners},
  author={Brown, Tom and Mann, Benjamin and Ryder, Nick and Subbiah, Melanie and Kaplan, Jared D and Dhariwal, Prafulla and Neelakantan, Arvind and Shyam, Pranav and Sastry, Girish and Askell, Amanda and others},
  journal={Advances in neural information processing systems},
  volume={33},
  pages={1877--1901},
  year={2020}
}

@article{li2017learning,
  title={Learning without forgetting},
  author={Li, Zhizhong and Hoiem, Derek},
  journal={IEEE transactions on pattern analysis and machine intelligence},
  volume={40},
  number={12},
  pages={2935--2947},
  year={2017},
  publisher={IEEE}
}

@article{lee2017overcoming,
  title={Overcoming catastrophic forgetting by incremental moment matching},
  author={Lee, Sang-Woo and Kim, Jin-Hwa and Jun, Jaehyun and Ha, Jung-Woo and Zhang, Byoung-Tak},
  journal={Advances in neural information processing systems},
  volume={30},
  year={2017}
}

@article{ahn2019uncertainty,
  title={Uncertainty-based continual learning with adaptive regularization},
  author={Ahn, Hongjoon and Cha, Sungmin and Lee, Donggyu and Moon, Taesup},
  journal={Advances in neural information processing systems},
  volume={32},
  year={2019}
}

@inproceedings{aljundi2018memory,
  title={Memory aware synapses: Learning what (not) to forget},
  author={Aljundi, Rahaf and Babiloni, Francesca and Elhoseiny, Mohamed and Rohrbach, Marcus and Tuytelaars, Tinne},
  booktitle={Proceedings of the European conference on computer vision (ECCV)},
  pages={139--154},
  year={2018}
}

@inproceedings{douillard2020podnet,
  title={Podnet: Pooled outputs distillation for small-tasks incremental learning},
  author={Douillard, Arthur and Cord, Matthieu and Ollion, Charles and Robert, Thomas and Valle, Eduardo},
  booktitle={Computer vision--ECCV 2020: 16th European conference, Glasgow, UK, August 23--28, 2020, proceedings, part XX 16},
  pages={86--102},
  year={2020},
  organization={Springer}
}

@inproceedings{rebuffi2017icarl,
  title={icarl: Incremental classifier and representation learning},
  author={Rebuffi, Sylvestre-Alvise and Kolesnikov, Alexander and Sperl, Georg and Lampert, Christoph H},
  booktitle={Proceedings of the IEEE conference on Computer Vision and Pattern Recognition},
  pages={2001--2010},
  year={2017}
}

@inproceedings{yan2021dynamically,
  title={Der: Dynamically expandable representation for class incremental learning},
  author={Yan, Shipeng and Xie, Jiangwei and He, Xuming},
  booktitle={Proceedings of the IEEE/CVF conference on computer vision and pattern recognition},
  pages={3014--3023},
  year={2021}
}

@inproceedings{mallya2018packnet,
  title={Packnet: Adding multiple tasks to a single network by iterative pruning},
  author={Mallya, Arun and Lazebnik, Svetlana},
  booktitle={Proceedings of the IEEE conference on Computer Vision and Pattern Recognition},
  pages={7765--7773},
  year={2018}
}

@inproceedings{douillard2022dytox,
  title={Dytox: Transformers for continual learning with dynamic token expansion},
  author={Douillard, Arthur and Ram{\'e}, Alexandre and Couairon, Guillaume and Cord, Matthieu},
  booktitle={Proceedings of the IEEE/CVF conference on computer vision and pattern recognition},
  pages={9285--9295},
  year={2022}
}

@article{wang2022foster,
  title={FOSTER: Feature Boosting and Compression for Class-Incremental Learning},
  author={Wang, Fu-Yun and Zhou, Da-Wei and Ye, Han-Jia and Zhan, De-Chuan},
  journal={arXiv preprint arXiv:2204.04662},
  year={2022}
}

@inproceedings{wang2022beef,
  title={Beef: Bi-compatible class-incremental learning via energy-based expansion and fusion},
  author={Wang, Fu-Yun and Zhou, Da-Wei and Liu, Liu and Ye, Han-Jia and Bian, Yatao and Zhan, De-Chuan and Zhao, Peilin},
  booktitle={The eleventh international conference on learning representations},
  year={2022}
}

@article{fernando2017pathnet,
  title={Pathnet: Evolution channels gradient descent in super neural networks},
  author={Fernando, Chrisantha and Banarse, Dylan and Blundell, Charles and Zwols, Yori and Ha, David and Rusu, Andrei A and Pritzel, Alexander and Wierstra, Daan},
  journal={arXiv preprint arXiv:1701.08734},
  year={2017}
}

@article{wang2022s,
  title={S-prompts learning with pre-trained transformers: An occam’s razor for domain incremental learning},
  author={Wang, Yabin and Huang, Zhiwu and Hong, Xiaopeng},
  journal={Advances in Neural Information Processing Systems},
  volume={35},
  pages={5682--5695},
  year={2022}
}

@inproceedings{wang2022dualprompt,
  title={Dualprompt: Complementary prompting for rehearsal-free continual learning},
  author={Wang, Zifeng and Zhang, Zizhao and Ebrahimi, Sayna and Sun, Ruoxi and Zhang, Han and Lee, Chen-Yu and Ren, Xiaoqi and Su, Guolong and Perot, Vincent and Dy, Jennifer and others},
  booktitle={European conference on computer vision},
  pages={631--648},
  year={2022},
  organization={Springer}
}

@inproceedings{wang2022learning,
  title={Learning to prompt for continual learning},
  author={Wang, Zifeng and Zhang, Zizhao and Lee, Chen-Yu and Zhang, Han and Sun, Ruoxi and Ren, Xiaoqi and Su, Guolong and Perot, Vincent and Dy, Jennifer and Pfister, Tomas},
  booktitle={Proceedings of the IEEE/CVF conference on computer vision and pattern recognition},
  pages={139--149},
  year={2022}
}

@inproceedings{smith2023coda,
  title={Coda-prompt: Continual decomposed attention-based prompting for rehearsal-free continual learning},
  author={Smith, James Seale and Karlinsky, Leonid and Gutta, Vyshnavi and Cascante-Bonilla, Paola and Kim, Donghyun and Arbelle, Assaf and Panda, Rameswar and Feris, Rogerio and Kira, Zsolt},
  booktitle={Proceedings of the IEEE/CVF conference on computer vision and pattern recognition},
  pages={11909--11919},
  year={2023}
}

@article{chen2025coin,
  title={CoIN: A Benchmark of Continual Instruction Tuning for Multimodel Large Language Models},
  author={Chen, Cheng and Zhu, Junchen and Luo, Xu and Shen, Hengtao and Song, Jingkuan and Gao, Lianli},
  journal={Advances in Neural Information Processing Systems},
  volume={37},
  pages={57817--57840},
  year={2025}
}

@article{greco2019psycholinguistics,
  title={Psycholinguistics meets continual learning: Measuring catastrophic forgetting in visual question answering},
  author={Greco, Claudio and Plank, Barbara and Fern{\'a}ndez, Raquel and Bernardi, Raffaella},
  journal={arXiv preprint arXiv:1906.04229},
  year={2019}
}

@inproceedings{liu2024moe,
  title={When MOE Meets LLMs: Parameter Efficient Fine-tuning for Multi-task Medical Applications},
  author={Liu, Qidong and Wu, Xian and Zhao, Xiangyu and Zhu, Yuanshao and Xu, Derong and Tian, Feng and Zheng, Yefeng},
  booktitle={Proceedings of the 47th International ACM SIGIR Conference on Research and Development in Information Retrieval},
  pages={1104--1114},
  year={2024}
}

@article{lopez2017gradient,
  title={Gradient episodic memory for continual learning},
  author={Lopez-Paz, David and Ranzato, Marc'Aurelio},
  journal={Advances in neural information processing systems},
  volume={30},
  year={2017}
}

@inproceedings{fang2023eva,
  title={Eva: Exploring the limits of masked visual representation learning at scale},
  author={Fang, Yuxin and Wang, Wen and Xie, Binhui and Sun, Quan and Wu, Ledell and Wang, Xinggang and Huang, Tiejun and Wang, Xinlong and Cao, Yue},
  booktitle={Proceedings of the IEEE/CVF conference on computer vision and pattern recognition},
  pages={19358--19369},
  year={2023}
}

@article{zhang2022opt,
  title={Opt: Open pre-trained transformer language models},
  author={Zhang, Susan and Roller, Stephen and Goyal, Naman and Artetxe, Mikel and Chen, Moya and Chen, Shuohui and Dewan, Christopher and Diab, Mona and Li, Xian and Lin, Xi Victoria and others},
  journal={arXiv preprint arXiv:2205.01068},
  year={2022}
}

@article{vaswani2017attention,
  title={Attention is all you need},
  author={Vaswani, Ashish and Shazeer, Noam and Parmar, Niki and Uszkoreit, Jakob and Jones, Llion and Gomez, Aidan N and Kaiser, {\L}ukasz and Polosukhin, Illia},
  journal={Advances in neural information processing systems},
  volume={30},
  year={2017}
}

@article{liu2023visual,
  title={Visual instruction tuning},
  author={Liu, Haotian and Li, Chunyuan and Wu, Qingyang and Lee, Yong Jae},
  journal={Advances in neural information processing systems},
  volume={36},
  pages={34892--34916},
  year={2023}
}

@article{bai2025qwen2,
  title={Qwen2. 5-vl technical report},
  author={Bai, Shuai and Chen, Keqin and Liu, Xuejing and Wang, Jialin and Ge, Wenbin and Song, Sibo and Dang, Kai and Wang, Peng and Wang, Shijie and Tang, Jun and others},
  journal={arXiv preprint arXiv:2502.13923},
  year={2025}
}

@article{cao2024continual,
  title={Continual llava: Continual instruction tuning in large vision-language models},
  author={Cao, Meng and Liu, Yuyang and Liu, Yingfei and Wang, Tiancai and Dong, Jiahua and Ding, Henghui and Zhang, Xiangyu and Reid, Ian and Liang, Xiaodan},
  journal={arXiv preprint arXiv:2411.02564},
  year={2024}
}

@inproceedings{zeng2025modalprompt,
  title={Modalprompt: Towards efficient multimodal continual instruction tuning with dual-modality guided prompt},
  author={Zeng, Fanhu and Zhu, Fei and Guo, Haiyang and Zhang, Xu-Yao and Liu, Cheng-Lin},
  booktitle={Proceedings of the 2025 Conference on Empirical Methods in Natural Language Processing},
  pages={12137--12152},
  year={2025}
}

@article{guo2025hide,
  title={Hide-llava: Hierarchical decoupling for continual instruction tuning of multimodal large language model},
  author={Guo, Haiyang and Zeng, Fanhu and Xiang, Ziwei and Zhu, Fei and Wang, Da-Han and Zhang, Xu-Yao and Liu, Cheng-Lin},
  journal={arXiv preprint arXiv:2503.12941},
  year={2025}
}

@misc{vicuna2023,
    title = {Vicuna: An Open-Source Chatbot Impressing GPT-4 with 90\%* ChatGPT Quality},
    url = {https://lmsys.org/blog/2023-03-30-vicuna/},
    author = {Chiang, Wei-Lin and Li, Zhuohan and Lin, Zi and Sheng, Ying and Wu, Zhanghao and Zhang, Hao and Zheng, Lianmin and Zhuang, Siyuan and Zhuang, Yonghao and Gonzalez, Joseph E. and Stoica, Ion and Xing, Eric P.},
    month = {March},
    year = {2023}
}

@article{kong2025yooop,
  title={YoooP: You Only Optimize One Prototype per Class for Non-Exemplar Incremental Learning},
  author={Kong, Jiangtao and Zong, Zhenyu and Zhou, Tianyi and Shao, Huajie},
  journal={Transactions on Machine Learning Research},
  year={2025}
}
\bibliographystyle{icml2026}

\newpage
\clearpage

\appendix
\setcounter{page}{1}
\onecolumn
\section*{Appendix}
\section{Notation List}\label{app:notation}
Table \ref{tab:notations} describes notations used in our work.
\newcolumntype{b}{>{\hsize=1.4\hsize}X}
\newcolumntype{s}{>{\hsize=.6\hsize}X}
\begin{table*}[!h]
    \caption{Summary of Notations.}\label{tab:notations}
    \centering
    \begin{tabularx}{\linewidth}{|s|b|}
        \hline
        Notation & Definition \\ \hline
        $t$ & The current task ID \\ 
        \hline
        $\mathcal{D}_t=\{(X_t, P_t, S_t)\}$ & The dataset for task-$t$ \\ 
        \hline
        $X_t = \{ x_{t,i} \}$ & The set of input images for task-$t$ \\ 
        \hline
        $P_t = \{ p_{t,i} \}$ & The set of associated text inputs (prompts or questions) for task-$t$ \\ 
        \hline
        $S_t = \{ s_{t,i} \}$ & The set of ground-truth sentences for each image-text pair \\ 
        \hline
        $n_t$ & The number of instances in task-$t$ \\ 
        \hline
        $\hat{s}_{t,i}$ & The predicted sentence for image $x_{t,i}$ \\ 
        \hline
        $\theta_t$ & Trainable parameters of the visual encoder (if fine-tuned) \\ 
        \hline
        $\theta_{\star}$ & Frozen parameters of the pre-trained visual encoder \\ 
        \hline
        $\phi_t$ & Trainable parameters of the Large Language Model (LLM) at task-$t$ (if fine-tuned) \\ 
        \hline
        $\phi_{\star}$ & Frozen parameters of the pre-trained Large Language Model (LLM) \\ 
        \hline
        $f_{\theta}(\cdot)$ & Visual encoder with parameters $\theta$ \\ 
        \hline
        $A_{\omega_t, Q_t}(\cdot)$ & Alignment module (Q-Former) with trainable parameters $\omega_t$ and learnable queries $Q_t$ \\ 
        \hline
        $g_{\phi}(\cdot)$ & Large Language Model (LLM) with parameters $\phi$ \\ 
        \hline
        $\omega_t$ & Trainable parameters of the Q-Former, including attention layers and a fully connected layer for query projection \\ 
        \hline
        $d_v$ & Dimension of the visual embeddings \\ 
        \hline
        $Q_t$ & Learnable query tokens \\ 
        \hline
        $n_Q$ & The number of learnable query tokens in Q-Former \\ 
        \hline
        $d_Q$ & The dimension of each query token in Q-Former \\ 
        \hline
        $v_t $ & Task-specific query tokens in the Mixture of Query (MoQ) module for task-$t$\\ 
        \hline
        $k_t$ & Task-specific key in the Mixture of Query (MoQ) module for task-$t$\\ 
        \hline
        $e_{t,i}=f_{\theta_{\star}}(x_{t,i})$ & Image embeddings \\ 
        \hline
        $\overline{e}_{t,i}$ & Average embedding over all patches in $e_{t,i}$ \\ 
        \hline
        $e_k$ & The $k$-th compact visual embedding in the dictionary \\ 
        \hline
        $K_t = [k_1, k_2, \dots, k_t]$ & Collected keys for all tasks \\ 
        \hline
        $V_t = [v_1, v_2, \dots, v_t]$ & Collected query tokens for all tasks \\ 
        \hline
        $S(\omega_t)$ & FIM-based metric for measuring conflict in parameter updates \\ 
        \hline
        $I_+(\omega_t)$ & Incremental contribution to $\Delta \mathcal{L}_{\mathcal{D}_t}$ \\ 
        \hline
        $I_-(\omega_t)$ & Decremental contribution to $\Delta \mathcal{L}_{\mathcal{D}_t}$ \\ 
        \hline
        $\Omega_{t} = \{ \omega_{t}, K_{t}, V_{t} \}$ & Learnable Parameters for task-$t$ \\
        \hline
        $D\in \mathbb{R}^{m\times d_v}$ & Overcomplete dictionary of compressed visual embeddings \\ 
        \hline
        $\alpha_k$ & Sparse coding coefficients \\ 
        \hline
        $|| \cdot ||_F$ & Frobenius norm \\ 
        \hline
        $m$ & Number of dictionary atoms in the embedding dictionary \\ 
        \hline
        $\gamma$ & Regularization weight in sparse coding \\ 
        \hline
    \end{tabularx}
\end{table*}

\twocolumn
\section{New Benchmark}\label{app:benchmark}
To evaluate the proposed method under a more realistic scenario, we propose a new setting for IL in OpenITG tasks. Unlike previous benchmarks~\citep{del2020ratt,qian2023decouple} that group images into five disjoint categories and remove images with objects from multiple categories, as we described at Sec.~\ref{sec:intro}, we split the tasks based on the image's ``main topic''. We define an image's main topic as the semantic category of its most prominent object. In our scenario, grouping images by main topic reflects real-world conditions, where a single image may predominantly contain one semantic category while also including other contextual elements. Thus, as shown in Fig.~\ref{fig:dataset_dis}, the distribution of semantic categories changes as the main topic evolves. This design effectively simulates the dynamic distribution shifts caused by environmental and temporal changes.

Based on this setting, we construct four new IL benchmarks for OpenITG. For the COCO Caption and VQAv2 tasks based on the COCO ImageCaption~\citep{lin2014microsoft} dataset, we label our benchmarks as ToS-COCO Caption and ToS-VQAv2. We first extract the area, class, and super-category of each foreground object from MSCOCO instance labels. The super-category of the object with the largest area is assigned as the image's main topic. Initially, 12 super-categories are obtained: People, Animal, Vehicle, Outdoor, Sports, Kitchen, Food, Furniture, Electronic, Appliance, Indoor, and Accessory. However, some main topics, such as ``People'' and ``Kitchen'' frequently appear across multiple main topics and are considered ``common topics.'' Images labeled with these common topics are reassigned based on the next largest non-common object. The final benchmark comprises 10 main topics: ``Animal'', ``Vehicle'', ``Outdoor'', ``Sports'', ``Food'', ``Furniture'', ``Electronics'', ``Appliances'', ``Indoor'', ``Accessories.'' 
For images without instance information, we employ GPT-4o with specific instructions to detect prominent foreground objects and assign classes and main topics accordingly.

For the TextCaps and TextVQA tasks, we follow a similar procedure and label our benchmarks as ToS-TextCaps and ToS-TextVQA. Since TextCaps provides only object classes and may include inaccuracies, we use GPT-4o with specific prompts to verify and detect the prominent foreground objects. Following the 10 main topics defined in MSCOCO, we assign each image a main topic. In our experiments, we observe that very few images have ``Animal'' as their main topic, so the final benchmark comprises 9 main topics, consistent with MSCOCO except for ``Animal.''

We estimate the GPT-4o-assisted labeling error rate by comparing the initial GPT-4o-assisted main-topic proposals with the final labels verified by two human annotators. For COCO-family images without instance information, GPT-4o was used to detect prominent foreground objects and assign classes and main topics. For TextCaps and TextVQA, GPT-4o was used to verify and detect prominent foreground objects. All final labels were then verified by two human annotators. In the shared COCO image pool, 1,960 (1.59\%) out of 123,287 images required GPT-4o-assisted initial proposals, and final human verification changed the main topic for 225 of them, giving an error rate of 11.48\%. In the shared TextCaps and TextVQA image pool, final human verification changed the main topic for 2,137 out of 25,119 images, giving an 8.51\% error rate.

\begin{figure}[!htb]
\centering
\begin{subfigure}{\linewidth}
    \centering
    \includegraphics[width=\linewidth]{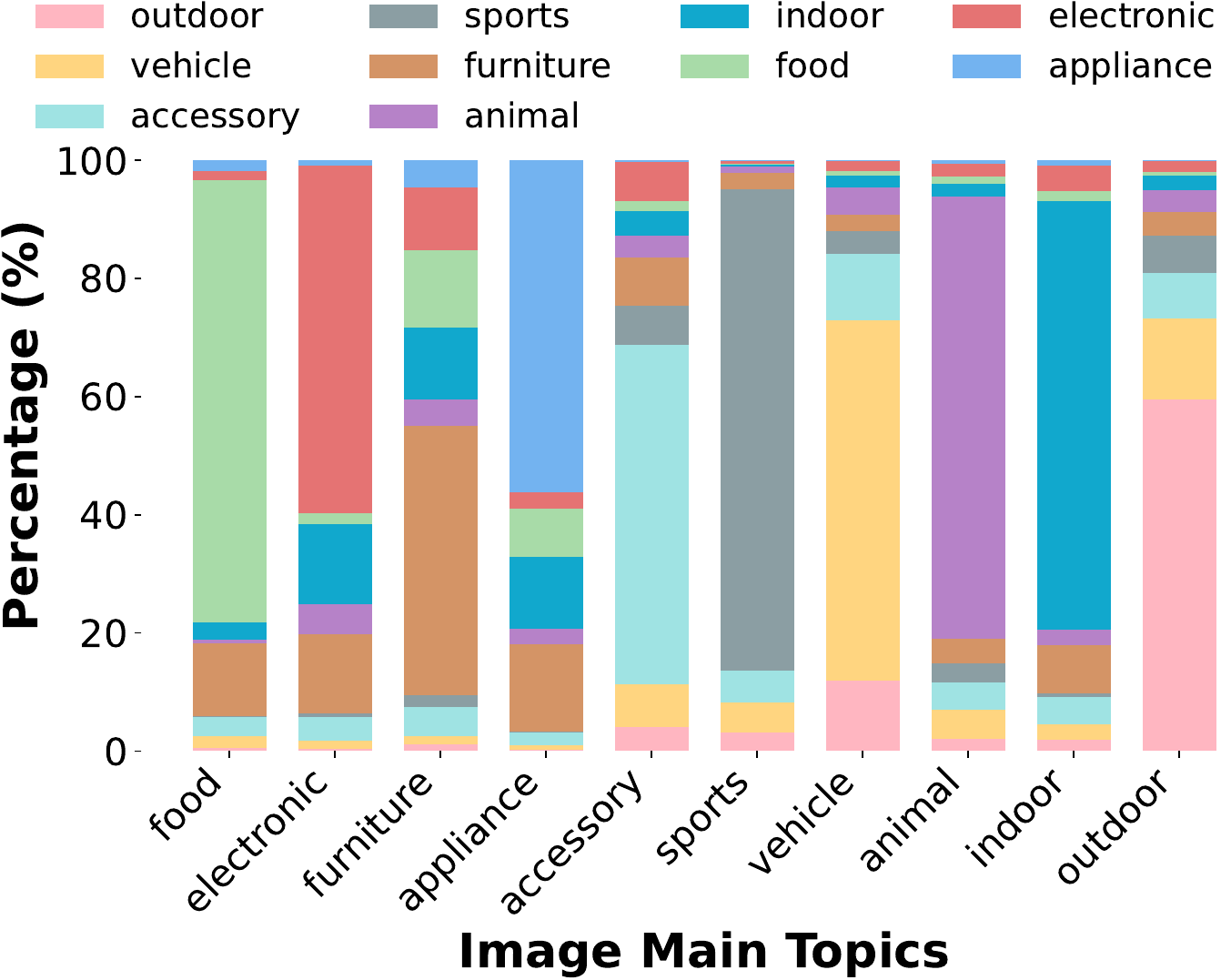}
    \label{fig:mscoco_dis}
\end{subfigure}
\hfill
\begin{subfigure}{\linewidth}
    \centering
    \includegraphics[width=\linewidth]{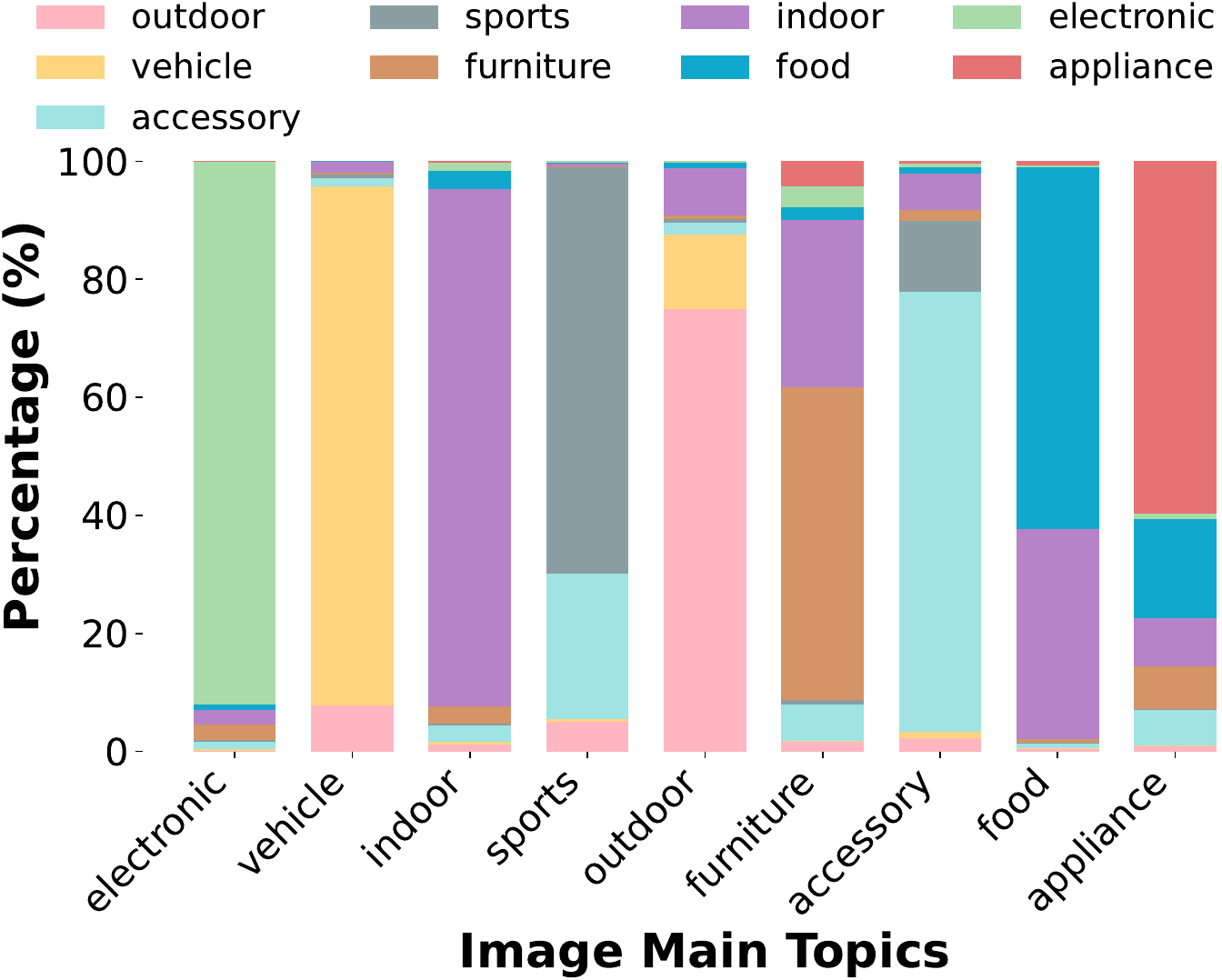}
    \label{fig:textcap_dis}
\end{subfigure}
\caption{The distribution of topics in four benchmarks. 
\textbf{Left:} ToS-COCO Caption \& ToS-VQAv2. 
\textbf{Right:} ToS-TextCaps \& ToS-TextVQA.}
\label{fig:dataset_dis}
\end{figure}

For ToS-COCO Caption, we adopt the train, validation, and test splits from prior work~\citep{karpathy2015deep} on COCO Caption and then partition the data into 10 tasks based on our labeled main topics. For ToS-VQAv2, ToS-TextCaps, and ToS-TextVQA, we use the official train-validation splits of VQAv2, TextCaps, and TextVQA and further divide them into 10, 9, and 9 tasks, respectively, according to our main topic labels.
\section{Experiments Protocol}\label{app:ep_protocol}
\begin{table*}[!t]
    \centering
    \caption{Hyper-parameters for fine-tuning ECA}
    \label{tab:hyperpm}
    \begin{adjustbox}{width=1\linewidth}
    \begin{tabular}{l|c|c|c|c}
        \toprule
        \textbf{Datasets} & \textbf{ToS-COCO Caption} & \textbf{ToS-VQAv2} & \textbf{ToS-TextCaps} & \textbf{ToS-TextVQA} \\
        \midrule
        Fine-tuning epochs & 5 & 5 & 5 & 5 \\
        Warm-up steps per task & 100 & 100 & 100 & 100 \\
        Learning rate & 1e-05 & 1e-5 & 1e-5 & 1e-5 \\
        Batch size & 64 & 64 & 32 & 32 \\        
        AdamW $\beta$ & (0.9, 0.999) & (0.9, 0.999) & (0.9, 0.999) & (0.9, 0.999) \\
        Weight decay & 0.05 & 0.05 & 0.05 & 0.05 \\
        Drop path & 0 & 0 & 0 & 0 \\
        Image resolution & 364 & 490 & 364 & 490 \\
        Inference beam size & 5 & 5 & 5 & 5 \\
        Prompt & a photo of & Question: \{\} Short answer: & \makecell[c]{Based on OCR: \{\}.\\ A photo of} & \makecell[c]{Based on OCR: \{\}.\\ Question: \{\} Short answer:} \\
        Atom Number $m$  & 7040 & 7040 & 7040 & 7040 \\
        $\gamma$  & 1 & 1 & 1 & 1 \\
        \bottomrule
    \end{tabular}
    \end{adjustbox}
\end{table*}

Following the common protocol of the OpenITG tasks~\citep{li2023blip,del2020ratt,antol2015vqa}, we use metrics BLEU@4, CIDEr, and SPICE for Image Captioning tasks, and VQA Accuracy for open-ended VQA. 
We assess IL performance using three metrics: \emph{Average Performance} (Avg), \emph{Forward Transfer} (FWT), and \emph{Backward Transfer} (BWT), following the protocol in~\citep{lopez2017gradient}.
Let $r_{t,\tau}$ denote the OpenITG metric value of task-$\tau$ after training on task-$t$, $\overline{b}_\tau$ the OpenITG metric value for task-$\tau$ on the initial model parameters, and $T$ the total task number. Then, we can obtain the Avg of the final task as $Avg=\frac{1}{T}\sum_{i=1}^Tr_{T,i}$, the BWT as $BWT=\frac{1}{T-1}\sum_{i=1}^{T-1}r_{T,i}-r_{i,i}$, and the FWT as $FWT=\frac{1}{T-1}\sum_{i=2}^{T}r_{i-1,i}-\overline{b}_i$. We report these metrics to capture the model's overall performance on all tasks, the effect of learning new tasks on past performance, and the model's ability to transfer knowledge to unseen tasks, respectively.

\section{Experimental Details}\label{app:ep_detail}
\textbf{Training Details.} 
In our experiments, we use the pre-trained BLIP-2~\citep{li2023blip}, which includes pre-trained ViT-g/14 from EVA-CLIP~\citep{fang2023eva} as the frozen visual encoder, an unsupervised-trained OPT-2.7B~\citep{zhang2022opt} as the frozen LLM, and a pre-trained Q-Former. Then we instantiate ECA and all other methods at the Q-Former from the pre-trained BLIP-2, and we compare their performance across different datasets.
For all approaches, we use the same optimizer hyper-parameters as in the original BLIP-2. 

To apply CODA-Prompt and Dual-Prompt, we follow the original works~\citep{smith2023coda,wang2022dualprompt} and insert deep prompts into the self-attention layers of the Q-Former. Specifically, for CODA-Prompt, prompts are inserted into layers 1–5 of the Q-Former, matching the depth of both the Q-Former and ViT-B/16~\citep{dosovitskiy2020image}. The prompt length is set to $8$. Based on the ablation studies in the original papers, we increase the prompt pool size to $30$ prompts per task (300 in total for 10 tasks) for ToS-COCO Caption and $50$ prompts per task (500 in total for 10 tasks) for the rest of the datasets. This adjustment is made to ensure parameter efficiency, fairness, and to enhance performance. The orthogonal loss weight is set to $0.1$ following the original configuration. For Dual-Prompt, we insert G-prompts into layers 1–2 and E-prompts into layers 3–5. Similarly, we increase both G-prompt and E-prompt length to $500$ for ToS-COCO Caption and $800$ for the rest of the datasets. The balance factor in Dual prompt is set to $1$ following the original configuration.

For our ECA, within the Q-Former, FeDEx inserts Parallel Adapters (PAs) in every self-attention and feed-forward layer. The low-rank for PAs is set to $30$ for attention layers, and $512$ for feed-forward layers, with all PA scales set to $4$ according to~\citep{he2022towards}. For computational efficiency, we replace the non-linear activation with an identity function so that multiple frozen PAs can be merged during training and inference. For the embedding dictionary, we set the number of atoms as $m=5\times d_v$, where $d_v$ matches the dimension of the frozen visual encoder $f(x_{t,i};\theta_{\star})$. The weight of DR loss, $\lambda$ in Eq.~\ref{eq:final_loss}, is set to $0.1$, and the threshold of $S(\omega_t)$ in Eq.~\ref{eq:metric} is set to $0.5$ for all experiments.

For MoE-LoRA~\citep{chen2025coin}, we follow the original paper by setting 8 experts per layer and inserting MoE-LoRA into every feed-forward network, with each expert having a low-rank of 512, consistent with our ECA configuration.

For LwF and EWC~\citep{lee2017overcoming}, we directly apply them into the backbone, ``Vanilla (PA),'' to report the performance for fair comparison.

During training, we extract image features as in BLIP-2 and provide additional tokens to the Q-Former. Specifically, following Q-Former inputs, we input the question token for ToS-VQAv2, the official OCR token for ToS-TextCaps, and both OCR and question tokens for ToS-TextVQA to enable interaction with query tokens via the self-attention layers. These tokens introduce a small number of trainable parameters in the embedding layer and feed-forward network inside the Q-Former, so the reported number of trainable parameters differs across datasets. Training hyper-parameters are detailed in Tab.~\ref{tab:hyperpm}.
\section{Derivation and Proof of Theorem~\ref{theo:conflict}}
\label{app:derivation}
Here we provide the details of the derivation and proof of Theorem~\ref{theo:conflict}.
\conflict*
\subsection{Derivation of Theorem~\ref{theo:conflict}}
First, we introduce how to obtain the incremental and decremental contributions for $\mathcal{L}_{\mathcal{D}_t}(\omega_t)$
Our goal is to measure how $\omega_t$, updated by Eq.~\ref{eq:ce_loss} on dataset $\mathcal{D}_{t+1}$ for task-$t+1$, affects the performance on $\mathcal{D}_t$ for task-$t$. To this end, we first obtain the loss on $\mathcal{D}_t$ as the expectation of the cross-entropy loss $\mathcal{L}_{ce}$ on dataset $\mathcal{D}_t$:
\[
\mathcal{L}_{\mathcal{D}_t} = \mathbb{E}_{\mathcal{D}_t}[\mathcal{L}_{ce}].
\]
Then, we perform a second-order Taylor expansion of the loss $\mathcal{L}_{\mathcal{D}_t}(\omega)$ around $\omega_t$:
\begin{equation}
\label{eq:deltaL_diagonal}
\begin{aligned}
\Delta \mathcal{L}_{\mathcal{D}_t}(\Delta \omega) 
&\triangleq \mathcal{L}_{\mathcal{D}_t}(\omega_t + \Delta\omega) - \mathcal{L}_{\mathcal{D}_t}(\omega_t)\\[1mm]
&\approx \sum_{i=1}^N \nabla_{\omega_i} \mathcal{L}_{\mathcal{D}_t}(\omega_t)\,\Delta\omega_i \\
&\quad + \frac{1}{2} \sum_{i=1}^N H^i_{\mathcal{D}_t}(\omega_t)\, (\Delta\omega_i)^2,
\end{aligned}
\end{equation}
where $N$ is the number of parameters in $\omega_t$, and $H^i_{\mathcal{D}_t}(\omega_t)$ denotes the $i$-th diagonal element of the Hessian of $\mathcal{L}_{\mathcal{D}_t}$. 

In theory, the full Hessian $H_{\mathcal{D}_t}(\omega_t)$ contains off-diagonal elements that capture parameter interactions. However, computing the full Hessian is computationally prohibitive, and in many practical scenarios the off-diagonal elements are relatively small compared to the diagonal. Therefore, it is common to approximate the Hessian by its diagonal. Moreover, under the negative log-likelihood loss (i.e., for $\mathcal{L}_{\mathcal{D}_t}(\cdot)$), the diagonal of the Hessian is well approximated by the diagonal of the Fisher Information Matrix (FIM)~\citep{kirkpatrick2017overcoming}. Hence, we set
\[
H^i_{\mathcal{D}_t}(\omega_t) \approx F^i_{\mathcal{D}_t}(\omega_t).
\]
Since we use the (empirical) Fisher approximation, $F^i_{\mathcal{D}_t}(\omega_t)\ge 0$ for all $i$.

Under a small-step gradient descent update on $\mathcal{D}_{t+1}$ with $1$ learning rate, a typical update for each parameter is given by
\[
\Delta\omega_i = -\nabla_{\omega_i} \mathcal{L}_{\mathcal{D}_{t+1}}(\omega_t).
\]
Substituting this into Eq.~\ref{eq:deltaL_diagonal} yields the per-parameter impact function:
\begin{equation}
\label{eq:I_omega}
\begin{aligned}
I(\omega_{t,i}) =\; & -\nabla_{\omega_i} \mathcal{L}_{\mathcal{D}_t}(\omega_t)\,\nabla_{\omega_i} \mathcal{L}_{\mathcal{D}_{t+1}}(\omega_t) \\
&\quad + \frac{1}{2}\,F^i_{\mathcal{D}_t}(\omega_t)\,\Bigl(\nabla_{\omega_i} \mathcal{L}_{\mathcal{D}_{t+1}}(\omega_t)\Bigr)^2.
\end{aligned}
\end{equation}
A large positive $I(\omega_{t,i})$ indicates that updating $\omega_{t,i}$ on $\mathcal{D}_{t+1}$ would significantly impair the alignment on $\mathcal{D}_t$.

Furthermore, to better capture the overall impact on the model parameters, we aggregate these per-parameter impacts by distinguishing the positive and negative contributions:
\begin{equation}
\label{eq:I_pos}
I_{+}(\omega_t)=\sum_{i=1}^N \max\{0,\,I(\omega_{t,i})\},
\end{equation}
\begin{equation}
\label{eq:I_neg}
I_{-}(\omega_t)=\sum_{i=1}^N \min\{0,\, I(\omega_{t,i})\}.
\end{equation}
Then we define the normalized conflict metric as
\begin{equation}
\label{eq:metric_appendix}
S(\omega_t) = \frac{I_{+}(\omega_t)}{I_{+}(\omega_t)+|I_{-}(\omega_t)|} \in [0,1].
\end{equation}
The metric $S(\omega_t)$ reflects the overall conflict between parameter updates for the new task and the preservation of prior knowledge.

\subsection{Proof of Theorem~\ref{theo:conflict}}



\begin{figure*}[!t]
\centering
\begin{subfigure}{0.30\textwidth}
\centering
\includegraphics[width=\linewidth]{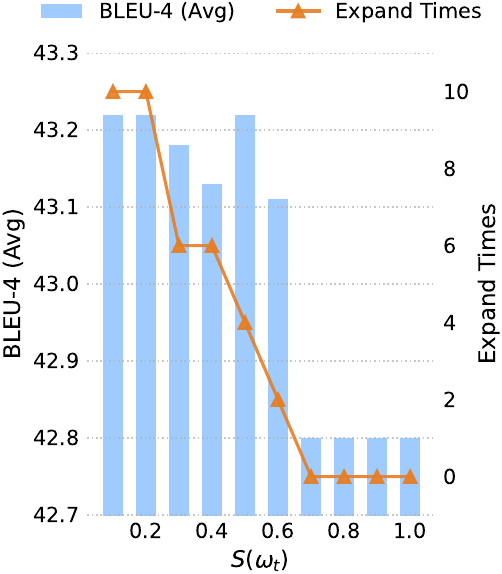}
\label{fig:thr_b4}
\end{subfigure}
\begin{subfigure}{0.31\textwidth}
\centering
\includegraphics[width=\linewidth]{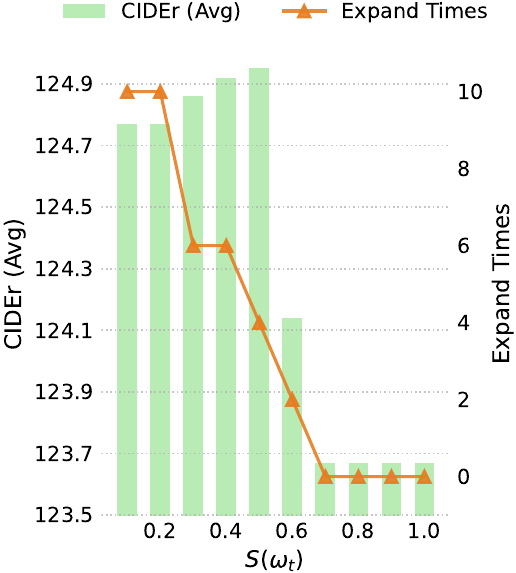}
\label{fig:thr_c}
\end{subfigure}
\begin{subfigure}{0.31\textwidth}
\centering
\includegraphics[width=\linewidth]{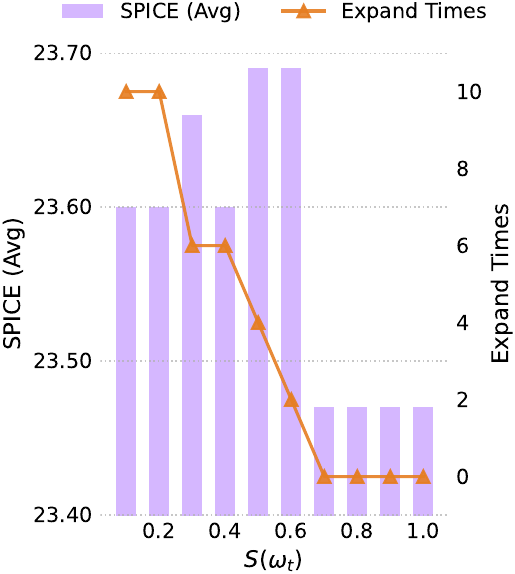}
\label{fig:thr_s}
\end{subfigure}
\begin{subfigure}{0.30\textwidth}
\centering
\includegraphics[width=\linewidth]{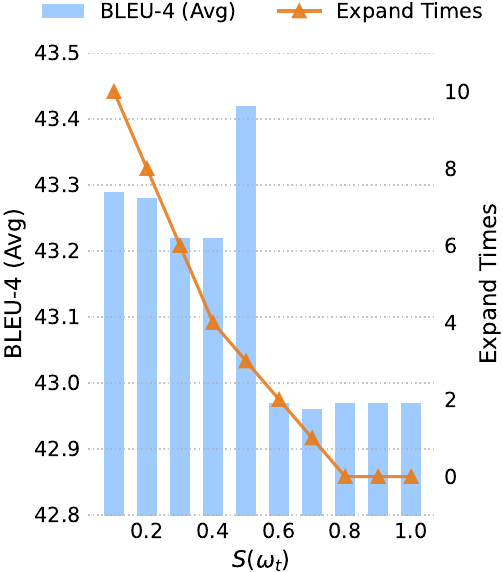}
\label{fig:eca_thr_b4}
\end{subfigure}
\begin{subfigure}{0.31\textwidth}
\centering
\includegraphics[width=\linewidth]{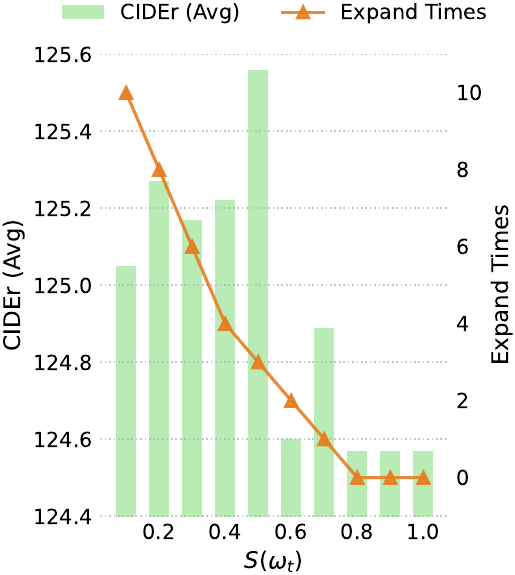}
\label{fig:eca_thr_c}
\end{subfigure}
\begin{subfigure}{0.31\textwidth}
\centering
\includegraphics[width=\linewidth]{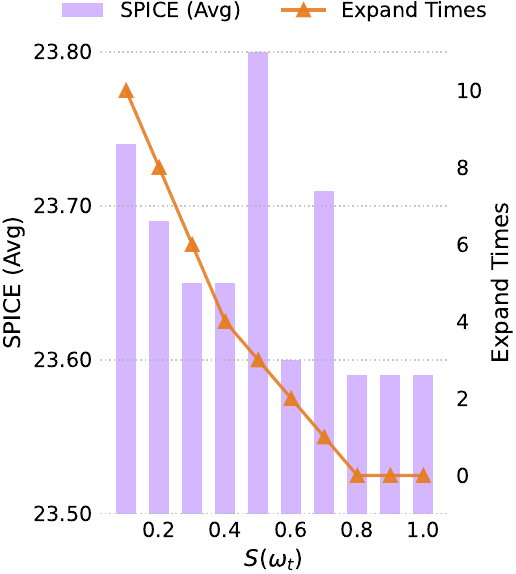}
\label{fig:eca_thr_s}
\end{subfigure}
\vspace{-0.15in}
\caption{Performance of different structures with different threshold of $S(\omega_t)$ in FeDEx. \textbf{Upper Line:} the performance of ``PA+MoQ+FeDEx''; \textbf{Bottom Line:} the performance of the whole ``ECA.''}
\label{fig:threshold}
\vspace{-0.1in}
\end{figure*}

\begin{proof}

By definition, we have:
\[
\Delta\mathcal{L}_{\mathcal{D}_t}(\Delta\omega)=I_+(\omega_t)+I_-(\omega_t).
\]

Since $I_+(\omega_t)\geq 0$ and $I_-(\omega_t)\leq 0$, we set $|I_-(\omega_t)|=-I_-(\omega_t)$ and rewrite the conflict metric as:
\[
S(\omega_t)=\frac{I_+(\omega_t)}{I_+(\omega_t)+|I_-(\omega_t)|}.
\]

Then, $\Delta\mathcal{L}_{\mathcal{D}_t}(\Delta\omega)\leq 0$ holds if and only if:
\[
I_+(\omega_t)\leq |I_-(\omega_t)|,
\]
which, after dividing by the positive term $(I_+(\omega_t)+|I_-(\omega_t)|)$, gives:
\[
S(\omega_t)\leq 0.5.
\]

Conversely, if $S(\omega_t)\leq 0.5$, then:
\[
I_+(\omega_t)\leq |I_-(\omega_t)|\Longrightarrow I_+(\omega_t)+I_-(\omega_t)\leq 0.
\]

Similarly, if $S(\omega_t)>0.5$, then $I_+(\omega_t)>|I_-(\omega_t)|$, which implies:
\[
\Delta\mathcal{L}_{\mathcal{D}_t}(\Delta\omega)=I_+(\omega_t)+I_-(\omega_t)>0.
\]

Thus, the equivalences hold in both directions, completing the proof.
\end{proof}

\subsection{Experiment on $S(\omega)$}\label{app:conflict_exp}
To verify our Theorem~\ref{theo:conflict}, 
we sweep the threshold of $S(\omega_t)$ and observe that $S(\omega_t)=0.5$ consistently attains the best trade-off across architectures and metrics (Fig.~\ref{fig:threshold}). 
Thresholds substantially smaller than $0.5$ tend to reduce performance due to unnecessary adapter expansion.

\section{Influence of Loss}\label{app:add-ablation}
We examine how each loss in MoQ influences model performance, \ie $\mathcal{L}_\text{orth}$ and $\mathcal{L}_\text{key}$ in Eq.~\ref{eq:moq_loss}, we compare their performance separately. As shown in Tab.~\ref{tab:ablation_moqloss}, the $\mathcal{L}_\text{orth}$ can significantly increase the BWT, which means it decreases the influence of the previously learned query tokens. Regarding the $\mathcal{L}_\text{key}$, optimizing only by $\mathcal{L}_\text{key}$ without the $\mathcal{L}_\text{orth}$ may increase the interference across tasks. However, while optimizing both losses, $\mathcal{L}_\text{key}$ can ensure that each task-specific key is relevant to visual embeddings in task-$t$, and leverage the $\mathcal{L}_\text{orth}$ to preserve distinct sets of query tokens.
\begin{table}[!th]
    \centering
    \caption{Ablation study on ToS-COCO Caption.}
    \label{tab:ablation_moqloss}
    \begin{adjustbox}{width=\linewidth}
    \setlength{\tabcolsep}{1pt}
    \begin{tabular}{l ccc ccc ccc}
        \toprule
        \multirow{2}{*}{\textbf{Method}} & 
        \multicolumn{3}{c}{\textbf{BLEU-4}} & 
        \multicolumn{3}{c}{\textbf{CIDEr}} & 
        \multicolumn{3}{c}{\textbf{SPICE}} \\
        \cmidrule(lr){2-4} \cmidrule(lr){5-7} \cmidrule(lr){8-10}
         & Avg $\uparrow$ & BWT $\uparrow$ & FWT $\uparrow$  
         & Avg $\uparrow$ & BWT $\uparrow$ & FWT $\uparrow$  
         & Avg $\uparrow$ & BWT $\uparrow$ & FWT $\uparrow$ \\
        \midrule
        Vanilla (PA)           & 42.70 & -1.49 & 6.48 & 123.00 & -4.50 & 19.15 & 23.39 & -0.78 & 2.64 \\
        PA+MoQ ($\mathcal{L}_\text{key}$)              & 42.76 & -1.80 & 6.66 & 123.29 & -4.34 & 18.50 & 23.37 & -0.79 & 2.71 \\
        PA+MoQ ($\mathcal{L}_\text{orth}$)           & 42.67 & -1.30 & 6.92 & 123.21 & -3.99 & 19.84 & 23.40 & -0.68 & 2.88 \\
        PA+MoQ ($\mathcal{L}_\text{orth}+\mathcal{L}_\text{key}$)   & 42.80 & -1.25 & 6.77 & 123.67 & -3.66 & 19.05 & 23.47 & -0.59 & 2.71 \\
        \bottomrule
    \end{tabular}
    \end{adjustbox}
\end{table}

\begin{table}[!th]
\centering
\begin{subtable}{\linewidth}
\centering
\caption{``PA+MoQ+DR'' with different $\lambda$}
\label{tab:lambda}
\begin{adjustbox}{width=\linewidth}
\begin{tabular}{l ccccc}
\toprule
$\lambda$ & 0.01 & 0.05 & 0.1 & 0.5 & 1 \\
\midrule
BLEU-4 (Avg) & 42.61 & 42.34 & \textbf{42.97} & 42.71 & 42.39 \\
CIDEr (Avg)  & 123.38 & 122.64 & \textbf{124.57} & 123.01 & 122.84 \\
SPICE (Avg)  & 23.54 & 23.29 & \textbf{23.59} & 23.59 & 23.49 \\
\bottomrule
\end{tabular}
\end{adjustbox}
\end{subtable}\hfill
\begin{subtable}{\linewidth}
\centering
\caption{``PA+MoQ+DR'' with different $m=M\times d_v$}
\label{tab:dr}
\begin{adjustbox}{width=\linewidth}
\begin{tabular}{l ccccc}
\toprule
$m$       & 2.5x & 5x & 7.5x & 10x & 12.5x \\
\midrule
BLEU-4 (Avg) & 41.63 & \textbf{42.97} & 42.65 & 42.56 & 42.85 \\
CIDEr (Avg)  & 122.42 & \textbf{124.57} & 123.22 & 123.38 & 123.59 \\
SPICE (Avg)  & 23.53 & \textbf{23.59} & 23.54 & 23.46 & 23.48 \\
\bottomrule
\end{tabular}
\end{adjustbox}
\end{subtable}
\caption{Ablations of Hyper-parameters in DR on ToS COCO Caption}
\label{tab:dr_ablate}
\end{table}

\begin{figure*}[!t]
\vspace{-0.05in}
\centering
\includegraphics[width=0.99\linewidth]{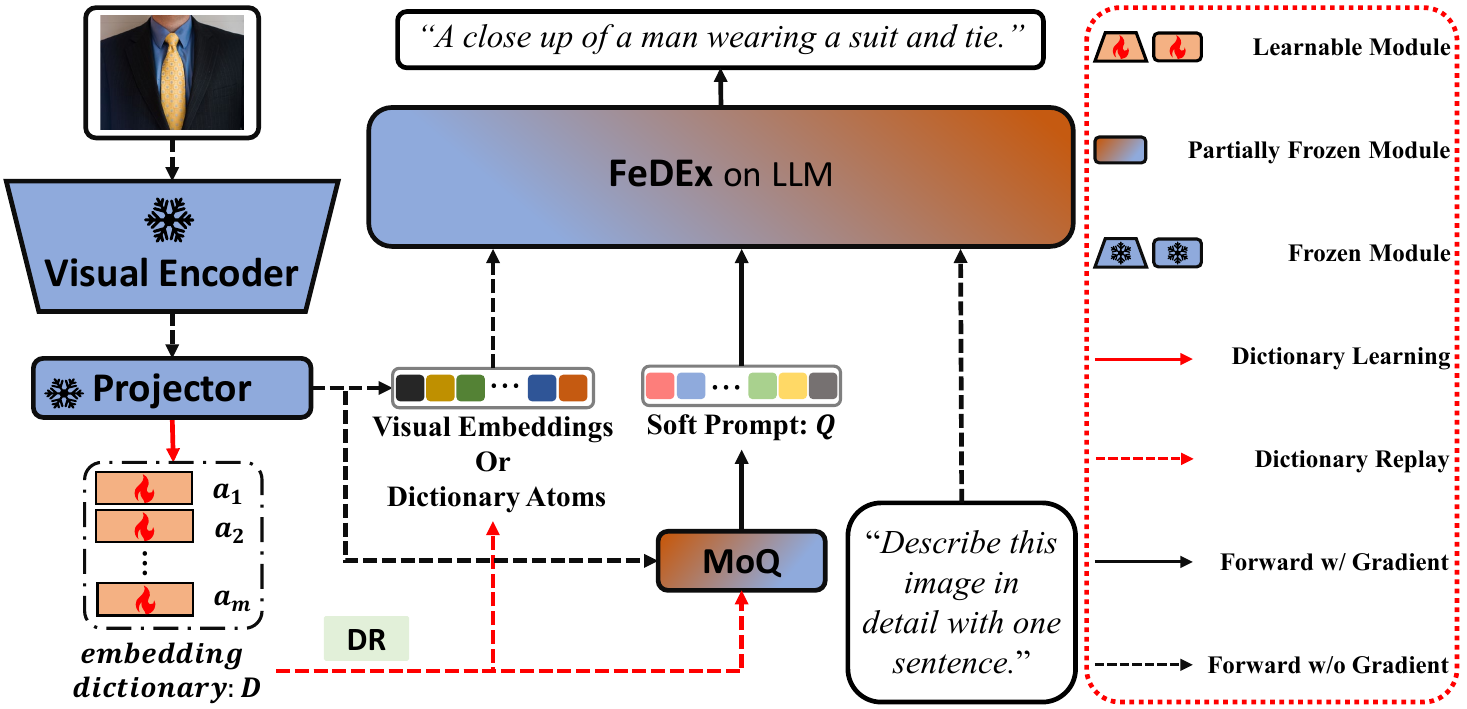}
\vspace{-0.05in}
\caption{The framework of our ECA instantiated on a projector-based multi-modal LLM (\eg, LLaVA). An input image is processed by a frozen visual encoder with a pre-trained projector to produce visual tokens. These visual tokens enter the \textbf{M}ixture \textbf{o}f \textbf{Q}uery module, which generates soft prompt tokens that are concatenated with visual tokens to interact with the LLM equipped with \textbf{F}ish\textbf{e}r \textbf{D}ynamic \textbf{Ex}pansion, to generate text conditioned on visual context. After the current task, visual tokens update the embedding dictionary via sparse dictionary learning. During training, the \textbf{D}ictionary \textbf{R}eplay module replays the embedding dictionary to retain the former alignment.}
\label{fig:llava_pipeline}
\end{figure*}

\section{Effect of Hyper-parameters}\label{app:hyper-parameter}
Then we apply the grid search to explore the effect of three hyper-parameters, \ie DR weight $\lambda$, DR's embedding dictionary atom number $m$, and the threshold of FIM-based metric value, $S(\omega_t)$, in FeDEx.
For exploring the hyper-parameters of DR, we tested a range of $\lambda$ and $m$ settings on the ``PA+MoQ+DR'' structure. 
As shown in Tab.~\ref{tab:lambda} and Tab.~\ref{tab:dr}, we set $\lambda=0.1$ in all experiments and $m=5\times d_v$.

\section{Parameter and Inference Efficiency}
\label{app:param-eff}
In this section, we further analyze the parameter and inference efficiency of methods on an NVIDIA A40 GPU. As shown in Tab.~\ref{tab:efficiency}, we compare parameter and runtime efficiency on the ToS-COCO Caption benchmark. All methods share the same pre-trained BLIP-2 backbone and are evaluated on the same GPU with an identical batch size and input length.
\begin{table*}[!th]
\centering
\caption{Parameter and inference efficiency analysis on ToS-COCO Caption.}
\vspace{-0.1in}
\label{tab:efficiency}
\begin{adjustbox}{width=\linewidth}
    \begin{tabular}{lccccccc|c}
        \toprule 
        \multirow{2}{*}{\textbf{Metric}} &\multicolumn{8}{c}{\textbf{Method}} \\
        \cmidrule(lr){2-9}
        &Vanilla (PA) &Vanilla (Q-Former) &LwF &EWC &Dual-Prompt &CODA-Prompt &MoE-LoRA &\textbf{ECA (Ours)} \\
        \midrule
        Trainable Params $\downarrow$ &12.29M &107.13M &12.29M &12.29M &14.30M &15.41M &98.84M &12.29M \\
        Training GPU Memory $\downarrow$ &18.80G &21.50G &18.80G &18.80G &19.47G &19.55G &21.37G &18.92G \\
        Inference GPU Memory $\downarrow$ &10.67G &10.65G &10.67G &10.67G &10.70G &10.70G &11.02G &10.72G \\
        Throughput (token/s) $\uparrow$ &36.52 &37.62 &36.52 &36.52 &32.82 &34.77 &36.37 &36.49 \\
        \bottomrule
    \end{tabular}
\end{adjustbox}
\end{table*}
Comparing the Trainable Params and training/inference GPU memory usage in Tab.~\ref{tab:efficiency}, ECA uses almost the same number of trainable parameters and peak GPU memory as baselines with single PA (\ie Vanilla (PA), LwF, EWC), while matching or even slightly exceeding their inference throughput. In contrast, methods with much larger parameter budgets (\ie Dual-Prompt, CODA-Prompt, MoE-LoRA) are slower and require more memory. These results support our claim that ECA enhances continuous performance with parameter efficiency and that it better leverages a limited alignment capacity rather than blindly expanding the model.
\section{Additional Evaluation under Other Protocol}\label{app:protocol}
To examine whether the gains of ECA are tied to our ToS main-topic split, we additionally evaluate ECA under the ConVS protocol on CL-VQA2.0 from TRIPLET~\cite{qian2023decouple}. Unlike our ToS protocol, which constructs tasks around dominant visual topics, ConVS provides an alternative protocol for Incremental Learning in OpenITG.

We use the same pre-trained BLIP-2 backbone and follow the same training setup as in Appendix~\ref{app:ep_detail} for all methods except Dual-Prompt. For fair comparison, we increase both G-prompt and E-prompt length to 1600 for ConVS on CL-VQA2.0 since the total task number is 5. 

\begin{table}[!th]
    \centering
    \caption{Evaluation under ConVS protocol on CL-VQA2.0. \textbf{Bold}: Best results on each dataset. \underline{Underline}: Second-best results on each dataset. ``Avg'': Final Average performance; ``BWT'': Backward Transfer; ``FWT'': Forward Transfer.}
    \vspace{-0.05in}
    \label{tab:convs}
    \begin{adjustbox}{width=\linewidth}
    \setlength{\tabcolsep}{1pt}
    \begin{tabular}{l c ccc}
        \toprule
        \multirow{2}{*}{\textbf{Method}} & 
        \multirow{2}{*}{\textbf{\makecell[c]{\# Trainable\\ Params}}} & 
        \multicolumn{3}{c}{\textbf{VQA Acc}} \\
        \cmidrule(lr){3-5}
        & & Avg $\uparrow$ & BWT $\uparrow$ & FWT $\uparrow$ \\
        \midrule
        ZeroShot & 0 M & 49.89 & - & - \\
        Vanilla (PA) & 21.74 M & 67.54 & -2.25 & 15.51 \\
        LwF~\citep{li2017learning} & 21.74 M & 69.71 & 0.59 & 18.45 \\
        EWC~\citep{lee2017overcoming} & 21.74 M & 63.79 & -0.92 & 15.43 \\
        Dual-Prompt~\citep{wang2022dualprompt} & 22.88 M & 65.76 & 0.9 & 13.09 \\
        CODA-Prompt~\citep{smith2023coda} & 24.37 M & 66.47 & \underline{1.04} & 13.84 \\
        MoE-LoRA~\citep{liu2024moe} & 195.71 M & \underline{70.68} & 1.02 & \underline{18.70} \\
        \midrule
        \textbf{ECA (Ours)} & 21.74 M & \textbf{71.03} & \textbf{1.13} & \textbf{19.13} \\
        \bottomrule
    \end{tabular}
    \end{adjustbox}
    \vspace{-0.1in}
\end{table}

As shown in Tab.~\ref{tab:convs}, ECA consistently outperforms the baselines under the ConVS protocol on CL-VQA2.0, which confirms that ECA’s advantage is not specific to our ToS benchmark, but also holds under an alternative continual-learning protocol. Together with the main ToS results, the result further supports ECA as a general exemplar-free IL approach for OpenITG.
\begin{table*}[!t]
    \centering
    \caption{Evaluation on ToS-TextCaps and ToS-TextVQA. \textbf{Bold}: Best results on each dataset. \underline{Underline}: Second-best results on each dataset. ``Avg'': Final Average performance; ``BWT'': Backward Transfer; ``FWT'': Forward Transfer.}
    \vspace{-0.05in}
    \label{tab:llava_text}
    \begin{adjustbox}{width=1\textwidth}
    \setlength{\tabcolsep}{1.2pt}
    \begin{tabular}{l c ccc ccc ccc ccc}
        \toprule
        \multicolumn{2}{l}{\textbf{Tasks}} & \multicolumn{9}{c}{\textbf{ToS-TextCaps}} & \multicolumn{3}{c}{\textbf{ToS-TextVQA}} \\ \cmidrule(lr){1-2} \cmidrule(lr){3-11} \cmidrule(lr){12-14}
        \multirow{2}{*}{\textbf{Method}} & 
        \multirow{2}{*}{\textbf{\makecell[c]{\# Trainable\\ Params}}} & 
        \multicolumn{3}{c}{\textbf{BLEU-4}} & 
        \multicolumn{3}{c}{\textbf{CIDEr}} & 
        \multicolumn{3}{c}{\textbf{SPICE}} & 
        \multicolumn{3}{c}{\textbf{VQA Acc}} \\
        \cmidrule(lr){3-5} \cmidrule(lr){6-8} \cmidrule(lr){9-11} \cmidrule(lr){12-14}
        & & Avg $\uparrow$ & BWT $\uparrow$ & FWT $\uparrow$ 
        & Avg $\uparrow$ & BWT $\uparrow$ & FWT $\uparrow$ 
        & Avg $\uparrow$ & BWT $\uparrow$ & FWT $\uparrow$ 
        & Avg $\uparrow$ & BWT $\uparrow$ & FWT $\uparrow$ \\
        \midrule
        ZeroShot & 0 M & 10.49 & - & - & 31.59 & - & - & 9.95 & - & - & 2.49 & - & - \\
        Vanilla (PA) & 31.51 M & 22.31 & -0.63 & 9.87 & 89.97 & -3.17 & 49.78 & 15.89 & -0.79 & 5.57 & 17.10 & 5.30 & 7.27 \\
        LwF~\citep{li2017learning} & 31.51 M & 24.39 & 0.56 & 9.79 & 90.34 & -1.39 & 44.87 & 15.27 & -0.24 & 4.25 & 6.13 & -2.17 & 4.82 \\
        EWC~\citep{lee2017overcoming} & 31.51 M & 22.92 & -0.87 & 10.14 & 86.73 & -5.14 & 45.00 & 14.93 & -0.51 & 4.48 & 21.59 & 9.83 & 8.81 \\
        MoE-LoRA~\citep{liu2024moe} & 208.05 M & 17.33 & -2.09 & 5.55 & 71.13 & -2.85 & 28.66 & 13.21 & -0.32 & 1.62 & 8.04 & -3.33 & 6.02 \\
        ModalPrompt~\citep{zeng2025modalprompt} & 19.96 M & 11.70 & -2.07 & 1.99 & 36.32 & -8.64 & 8.10 & 9.88 & -0.69 & 0.15 & 4.06 & -2.62 & 4.26 \\
        \midrule
        \textbf{ECA (Ours)} & 31.51 M & \textbf{26.88} & \textbf{0.73} & \textbf{11.82} & \textbf{94.49} & \textbf{-1.37} & \textbf{49.57} & \textbf{16.10} & \textbf{-0.08} & \textbf{4.98} & \textbf{22.61} & \textbf{10.21} & \textbf{9.27} \\
        \bottomrule
    \end{tabular}
    \end{adjustbox}
    \vspace{-0.15in}
\end{table*}
\section{ECA on projector-based Multi-modal LLMs}
\label{app:eca_llava}
In this section, we further illustrate how to instantiate our proposed ECA on a projector-based multi-modal LLM (MLLM), \eg LLaVA~\cite{liu2023visual}. Unlike the Q-Former in BLIP-2, which uses learnable query tokens to directly interact with visual embeddings, LLaVA adopts a visual \emph{projector} that maps visual features into the language token space, and then relies on self-attention in the LLM to achieve visual–language alignment. In this case, we regard the top $L$ layers of the LLM as the effective alignment module. Based on this view, Fig.~\ref{fig:llava_pipeline} illustrates how we instantiate ECA on a projector-based MLLM.
As shown in Fig.~\ref{fig:llava_pipeline}, a frozen visual encoder followed by a pre-trained projector produces visual tokens. These visual tokens enter the Mixture-of-Query (MoQ) module, which learns task-specific soft prompts and attentively aggregates them. The visual tokens, aggregated soft prompts, and textual prompts are concatenated and fed into the LLM, whose top $L$ layers are equipped with FeDEx. FeDEx selectively expands parallel adapters in these layers based on the FIM-based conflict score, so that new features are incorporated while preserving established alignment. Meanwhile, Dictionary Replay (DR) maintains an embedding dictionary and replays it during training to retain information from previous tasks.

For DR on projector-based MLLMs, we use the dictionary atoms as visual tokens and pass them, together with soft prompts generated from MoQ, and a textual prompt, into the LLM. Concretely, we first use the model trained by previous tasks (teacher) to generate pseudo captions conditioned on the dictionary atoms, soft prompts, and textual prompt. We then feed the same inputs into the current training model (student) and compute a token-level KL divergence between the teacher and student predictive distributions as a knowledge distillation loss. This allows the embedding dictionary to replay past visual semantics without storing raw images, encouraging the projector-based MLLM with ECA to preserve alignment learned from earlier tasks.

Additionally, we instantiate ECA and representative baselines on a projector-based MLLM, LLaVA-v0~\cite{liu2023visual}. This experiment aims to verify whether our continual alignment framework generalizes beyond Q-Former based VLMs. It is not intended to establish state-of-the-art performance on the latest MLLM backbones. We use LLaVA-v0 because its training recipe is publicly documented, which enables a controlled evaluation where ToS-TextCaps and ToS-TextVQA are outside its published pre-training distribution. LLaVA-v0 uses CLIP-L~\cite{radford2021learning} as the visual encoder and Vicuna-7B-v0~\cite{vicuna2023} as the base LLM. We adapt Vanilla (PA), EWC, MoE-LoRA, and ECA to LLaVA-v0 and evaluate them on ToS-TextCaps and ToS-TextVQA.

We follow the system prompting template of LLaVA-v0 and develop specific human prompts for different datasets. For ToS-TextCaps, we use the prompt \textit{``Based on OCR: {}. Describe this image in detail with one sentence.''} For ToS-TextVQA, we use the prompt \textit{``Based on OCR: {}. {} Answer in one word.''} where the second placeholder corresponds to the question text. To instantiate continual alignment on this architecture, we treat the top $L{=}6$ layers of the base LLM as the alignment module and adapt all methods on these layers. We set the PA low-rank to $128$ for attention layers and $512$ for feed-forward layers, and we set all PA scales to $4$ following~\citep{he2022towards}. For MoE-LoRA, we use $16$ experts per layer and insert MoE-LoRA into every feed-forward network. Each expert uses a low-rank of $128$ due to our GPU memory budget. The remaining settings follow Appendix~\ref{app:ep_detail}. For ModalPrompt~\cite{zeng2025modalprompt}, we follow the original paper, set the prompt number as $10$ per task, choose top-$3$ prompt sets for training and inference, and we train $10$ epochs for each task.

We report baselines that can be instantiated on LLaVA-v0 under the same alignment-module scope with minimal additional design choices, including PA-style adapters, LwF, EWC, ModalPrompt, and MoE-LoRA. Prompt-pool methods such as Dual-Prompt and CODA-Prompt are not inherently tied to Q-Former. In our BLIP-2 experiments, we instantiated them by inserting deep prompts into the self-attention layers of the Q-Former. Porting these methods to projector-based MLLMs requires design choices on prompt injection layers, prompt pooling, and routing inside the LLM. These choices introduce additional hyperparameters and confounding factors, so we leave a thorough port of prompt-pool baselines to projector-based MLLMs as future work. However, we here include the recent prompt-pool method, Modal Prompt, which is originally designed with LLaVA, and it performs better than other prompt-pool methods. Thus, we also compare with the Modal Prompt to show our great performance.

As shown in Tab.~\ref{tab:llava_text}, ECA achieves the best final average performance on both ToS-TextCaps and ToS-TextVQA. It also improves BWT compared with Vanilla (PA) and MoE-LoRA while using far fewer trainable parameters than MoE-LoRA. These results indicate that ECA remains effective on projector-based MLLMs and can mitigate catastrophic forgetting beyond our BLIP-2 instantiation. We also observe that the absolute performance of LLaVA-v0 on these two benchmarks is lower than that of our BLIP-2 instantiation. We attribute this gap to two factors. First, LLaVA-v0 is primarily tuned for instruction following that often favors longer responses, whereas ToS-TextCaps and ToS-TextVQA emphasize short outputs. Second, LLaVA-v0 relies on CLIP features with a simple projector, while BLIP-2 uses a Q-Former that more directly injects fine-grained visual information into learnable tokens. Since OCR-based OpenITG tasks depend on detailed visual cues, BLIP-2 attains stronger performance in this setting. Nevertheless, our ECA achieves the strongest performance among all baselines on this projector-based MLLM, including ModalPrompt, which supports the generality of our framework beyond Q-Former-based VLMs. ModalPrompt underperforms in our setting because it is designed for continual instruction tuning under evolving textual instructions. In contrast, our ToS setting involves continual shifts in overlapping visual semantic topics.

\section{Case Study}\label{app:case_study}
\begin{figure*}[!tbh]
\centering
\begin{subfigure}{0.49\linewidth}
    \centering
    \includegraphics[width=\linewidth]{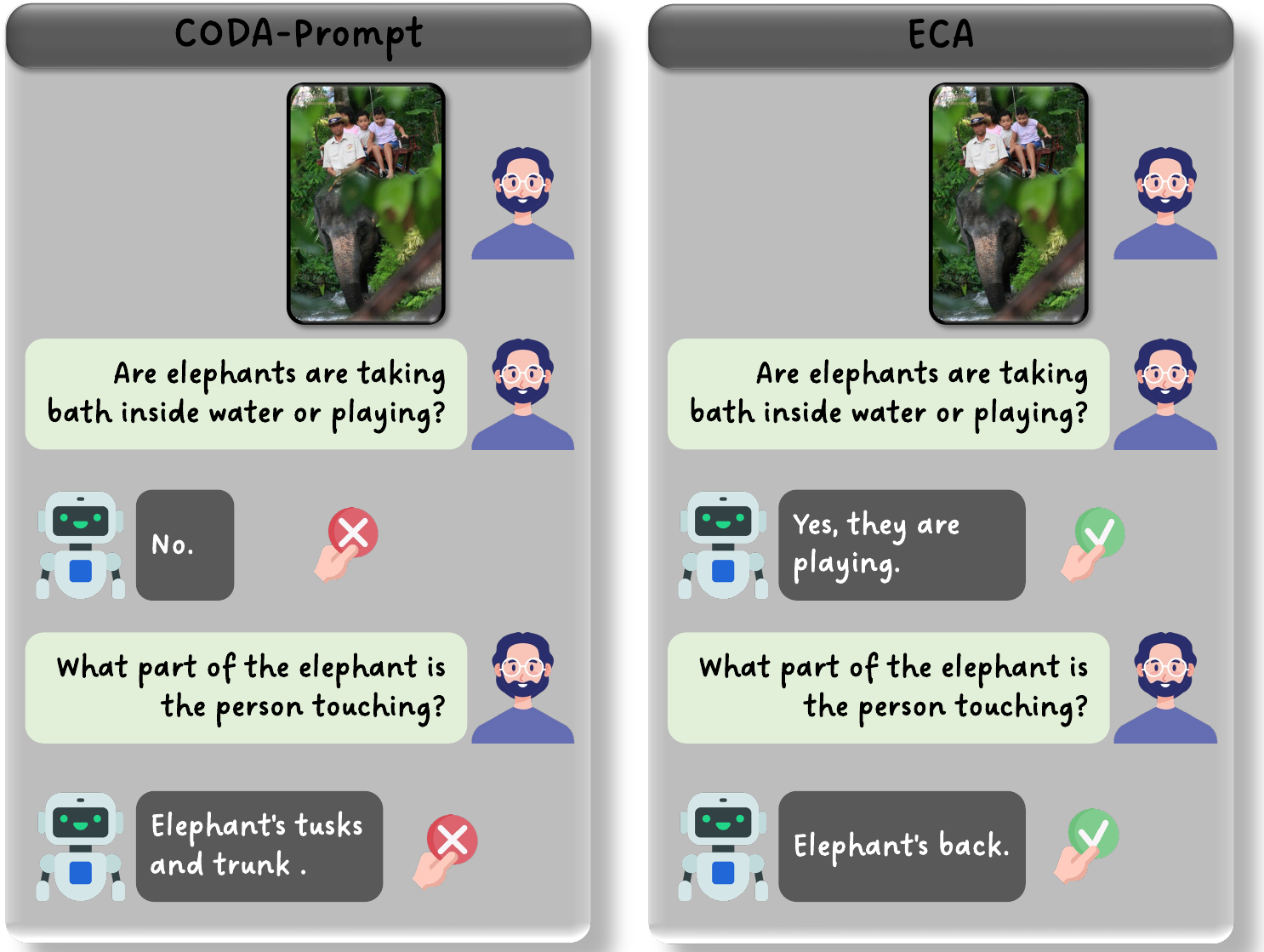}
    \label{fig:case_1}
\end{subfigure}
\hfill
\begin{subfigure}{0.49\linewidth}
    \centering
    \includegraphics[width=\linewidth]{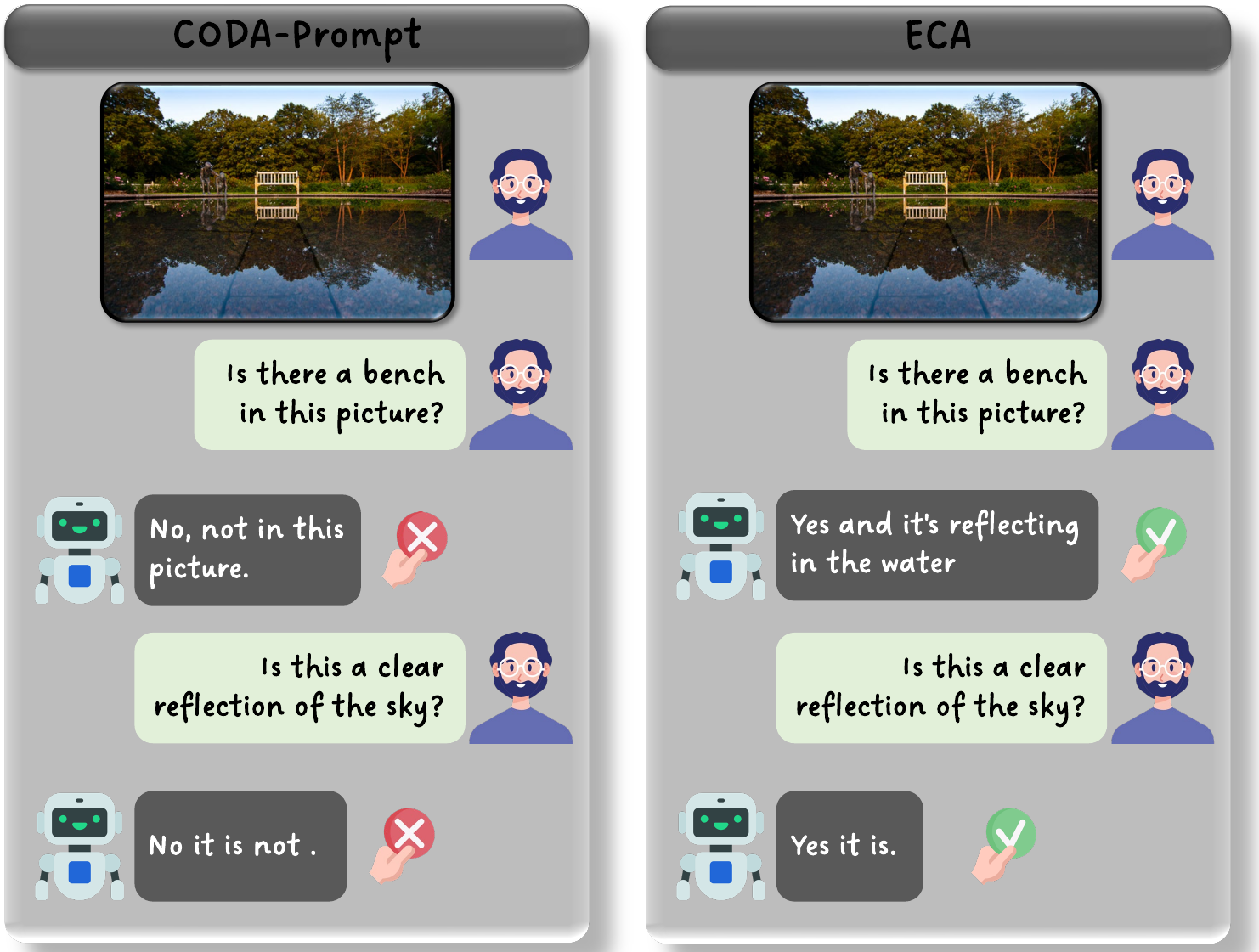}
    \label{fig:case_2}
\end{subfigure}
\begin{subfigure}{0.49\linewidth}
    \centering
    \includegraphics[width=\linewidth]{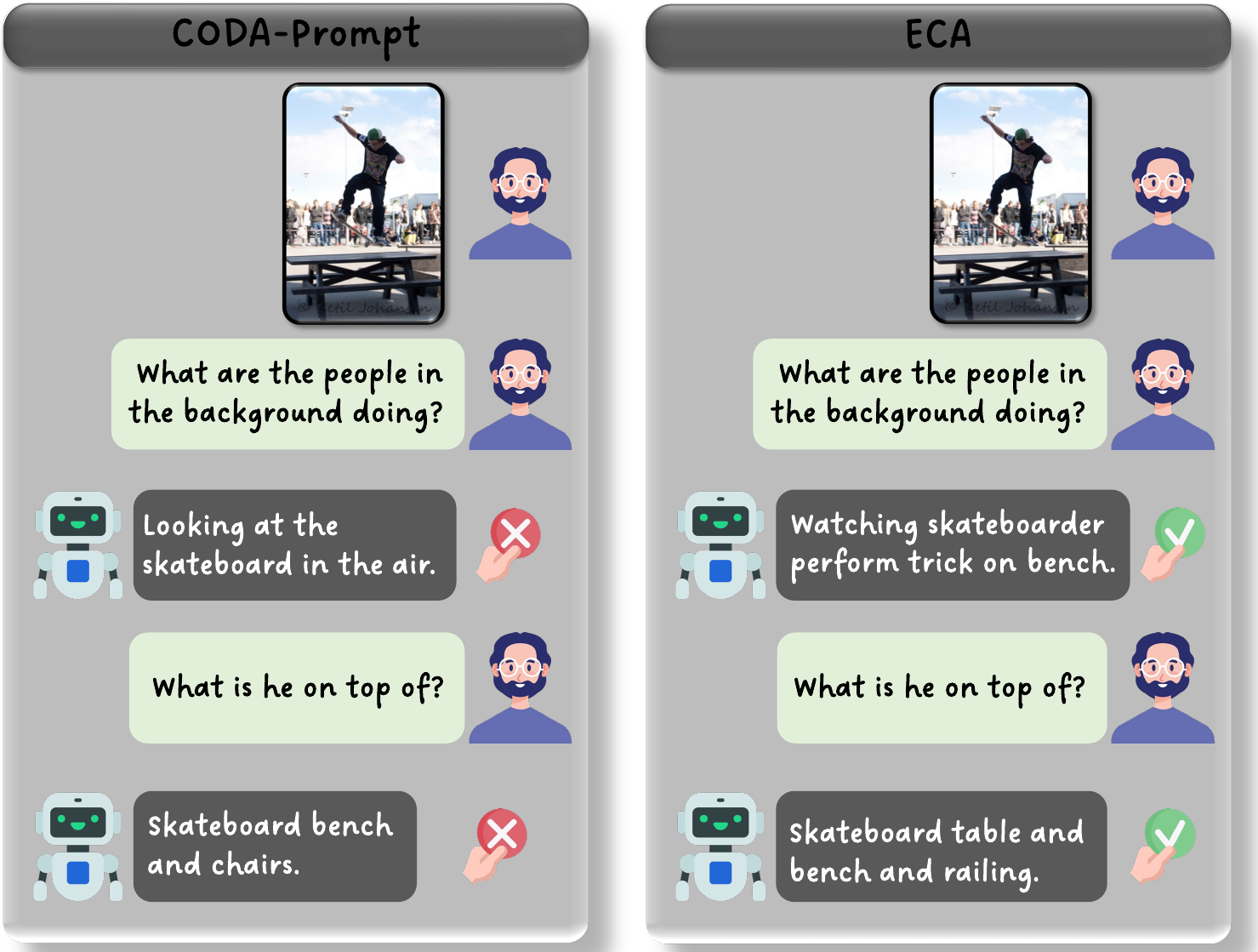}
    \label{fig:case_3}
\end{subfigure}
\begin{subfigure}{0.49\linewidth}
    \centering
    \includegraphics[width=\linewidth]{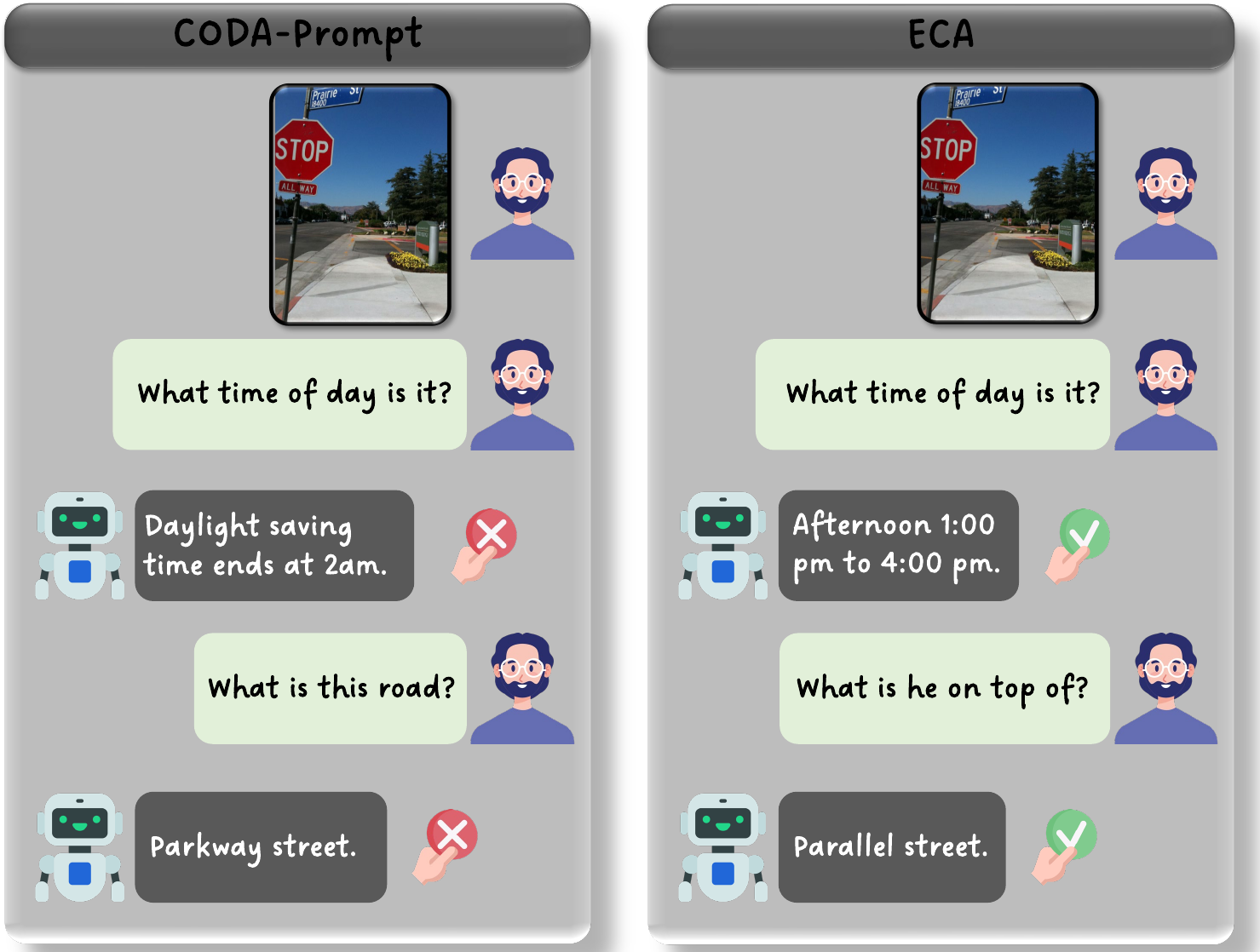}
    \label{fig:case_4}
\end{subfigure}
\caption{Comparison on the case from the first task in ToS-VQAv2.}
\label{fig:case_study}
\end{figure*}

\begin{figure*}[!t]
\centering
\begin{subfigure}{0.49\linewidth}
    \centering
    \includegraphics[width=\linewidth]{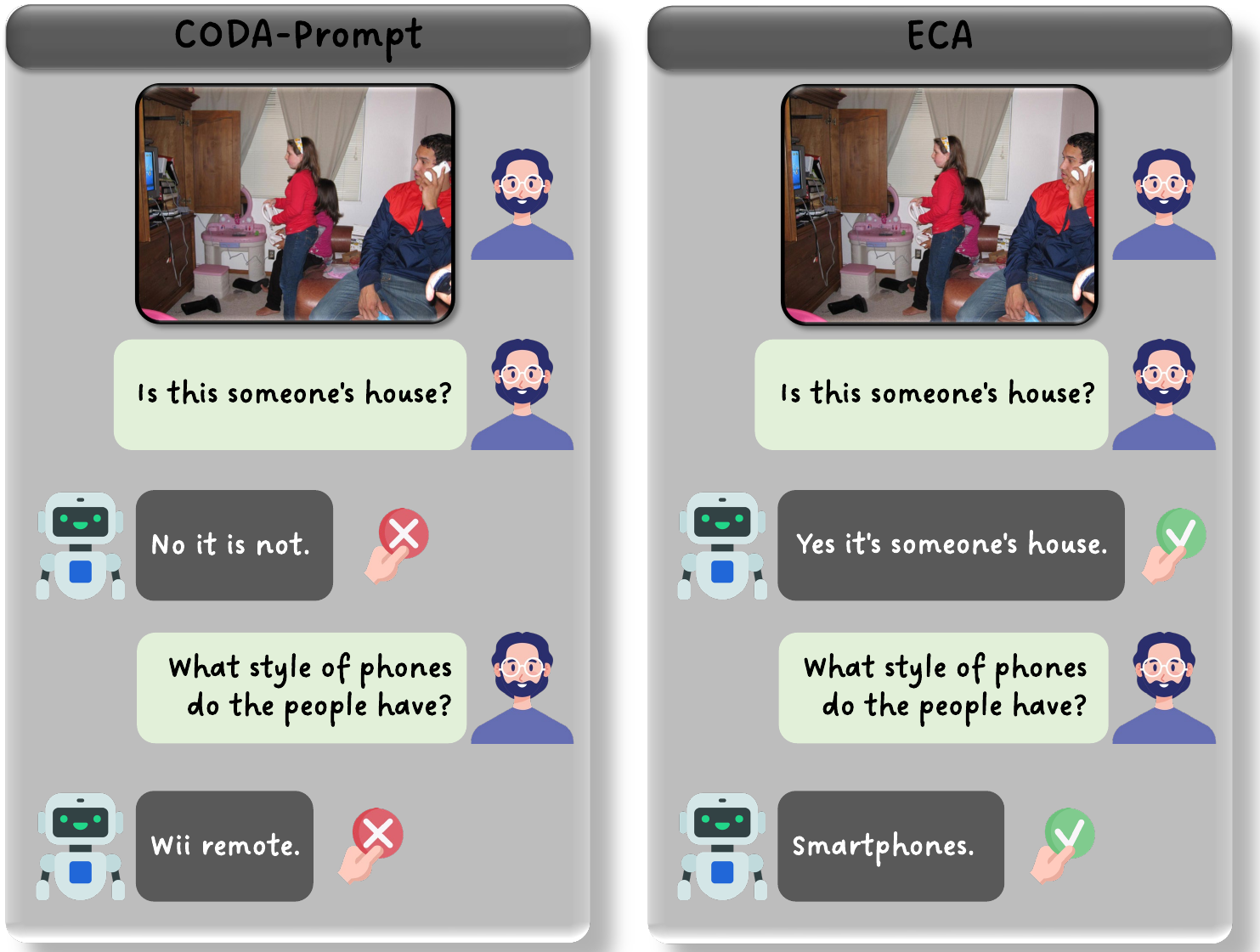}
    \label{fig:case_5}
\end{subfigure}
\hfill
\begin{subfigure}{0.49\linewidth}
    \centering
    \includegraphics[width=\linewidth]{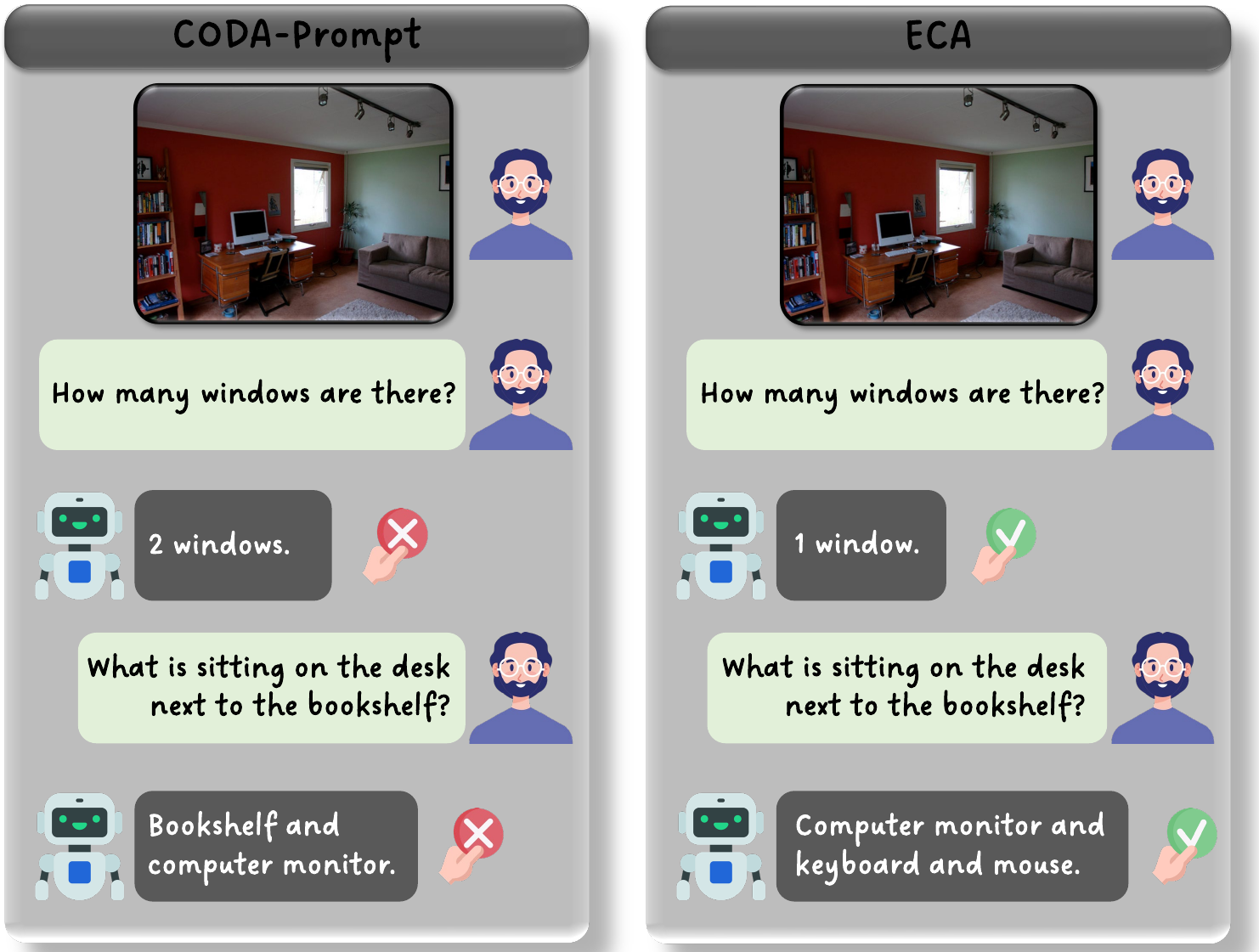}
    \label{fig:case_6}
\end{subfigure}
\begin{subfigure}{0.49\linewidth}
    \centering
    \includegraphics[width=\linewidth]{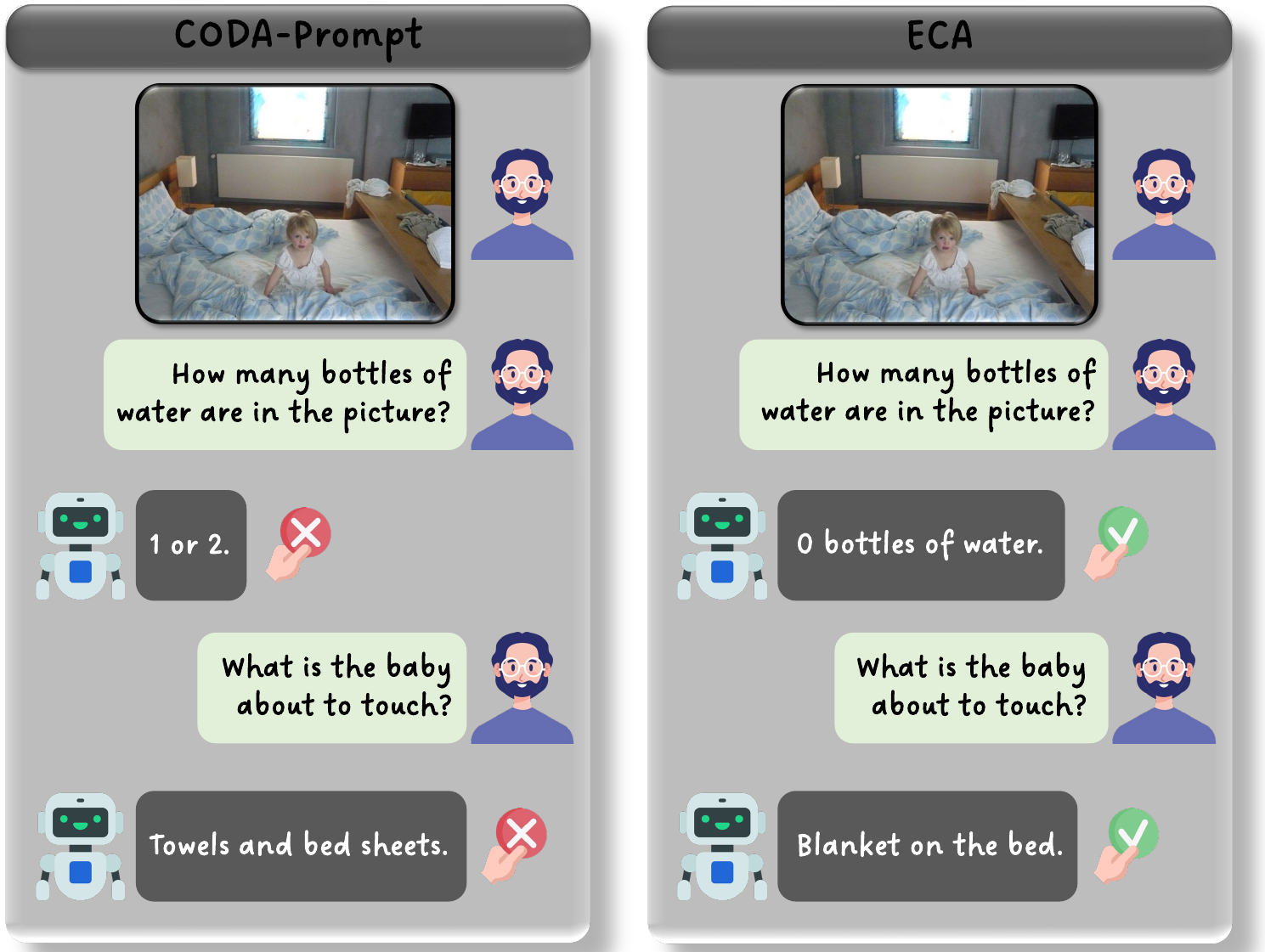}
    \label{fig:case_7}
\end{subfigure}
\begin{subfigure}{0.49\linewidth}
    \centering
    \includegraphics[width=\linewidth]{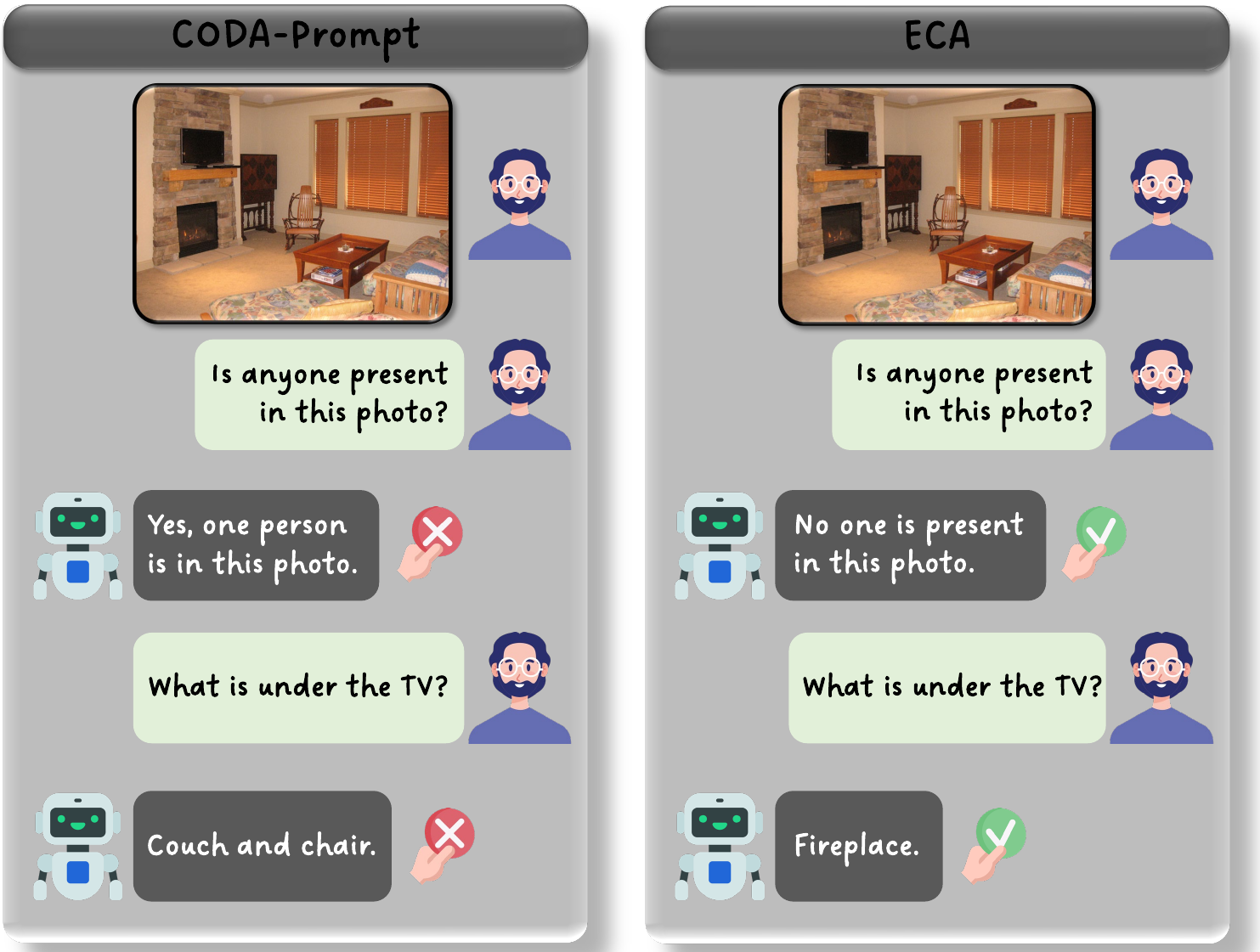}
    \label{fig:case_8}
\end{subfigure}
\caption{Comparison on the case from the middle task in ToS-VQAv2.}
\label{fig:case_study_2}
\end{figure*}

In this section, we show the comparison on real cases between our ECA and the CODA-Prompt, which is the second-best uni-modal exemplar-free baseline on the ToS-VQAv2. As shown in Fig.~\ref{fig:case_study}, we use real cases of the first main topic after training BLIP-2 by both methods on all tasks from ToS-VQAv2 to test the model. The CODA-Prompt ruined the alignment established from previous tasks while training the model sequentially. However, our ECA aligns both modalities continually, and finally, it can still respond in great shape to the previously learned knowledge. The reason is that the CODA-Prompt only uses soft prompts to learn the new knowledge, which is limited. Thus, it is hard to achieve continual alignment for VLM only by the prompt when the data has complex semantics, and tasks have overlapping semantics. We also compare cases from the middle task, which is closer to the final task. As shown in Fig.~\ref{fig:case_study_2}, the CODA-Prompt still can not understand well on some easy cases, while our ECA still shows a great performance on mitigating catastrophic forgetting.

\end{document}